\documentclass{article}

\usepackage[nonatbib, preprint]{neurips_2021}

\usepackage[utf8]{inputenc} %
\usepackage[T1]{fontenc}    %
\usepackage{hyperref}       %
\usepackage{url}            %
\usepackage{booktabs}       %
\usepackage{amsfonts}       %
\usepackage{nicefrac}       %
\usepackage{microtype}      %
\usepackage{xcolor}         %
\usepackage[numbers]{natbib}
\usepackage{multirow}
\usepackage{caption}

\usepackage{tikz,amsmath,siunitx}
\usetikzlibrary{positioning}
\usepackage{graphicx}
\usetikzlibrary{calc}
\usetikzlibrary{decorations.pathreplacing}
\usetikzlibrary{cd,arrows,shapes,shapes.misc,intersections}
\usepackage{algorithm}
\usepackage[noend]{algorithmic}

\newcommand{\loss}{\Lcal}
\usepackage{mathtools}
\usepackage{amsthm}

\newcommand{\TN}[1]{\text{TN}(#1)}

 \usepackage{relsize}

\usepackage{etoolbox}
\usepackage{xparse}

\newcommand{\R}{\Rbb}

\renewcommand{\vec}[1]{\ensuremath{\mathbf{#1}}}

\newcommand{\mat}[1]{\ensuremath{\mathbf{#1}}}

\newcommand{\ten}[1]{\mat{\ensuremath{\boldsymbol{\mathcal{#1}}}}}

\newcommand{\ttm}[1]{\times_{#1}}

\usepackage{pgffor}

\foreach \x in {A,...,Z}{%
\expandafter\xdef\csname \x bb\endcsname{\noexpand\ensuremath{\noexpand\mathbb{\x}}}
}

\foreach \x in {A,...,Z}{%
\expandafter\xdef\csname \x cal\endcsname{\noexpand\ensuremath{\noexpand\mathcal{\x}}}
}

\foreach \x in {A,...,Z}{%
\expandafter\xdef\csname \x t\endcsname{\noexpand\ensuremath{\noexpand\ten{\x}}}
}

\foreach \x in {A,...,Z}{%
\expandafter\xdef\csname \x b\endcsname{\noexpand\ensuremath{\noexpand\mat{\x}}}
}

\foreach \x in {a,...,r}{%
\expandafter\xdef\csname \x b\endcsname{\noexpand\ensuremath{\noexpand\vec{\x}}}
}
\foreach \x in {t,...,z}{%
\expandafter\xdef\csname \x b\endcsname{\noexpand\ensuremath{\noexpand\vec{\x}}}
}

\newcommand{\A}{\mat{A}}
\newcommand{\B}{\mat{B}}

\newcommand{\T}{\ten{T}}
\newcommand{\W}{\ten{W}}
\newcommand{\G}{\ten{G}}

\newcommand{\U}{\mat{U}}

\newcommand{\X}{\mat{X}}
\newcommand{\I}{\mat{I}}

\newcommand{\M}{\mat{M}}

\newcommand{\D}{\mat{D}}
\newcommand{\V}{\mat{V}}

\renewcommand{\v}{\vec{v}}
\renewcommand{\u}{\vec{u}}
\newcommand{\x}{\vec{x}}

\newcommand{\y}{\vec{y}}

\DeclareMathOperator*{\Tr}{Tr} %

\newcommand{\bigo}[1]{\mathcal{O}\left(#1\right)}

\newcommand{\eg}{e.g.\ }

\usepackage{amsmath}
\newtheorem{thm}{Theorem}
\newtheorem{prop}[thm]{Proposition}
\newtheorem*{prop*}{Proposition}

\usepackage[outdir=./images/]{epstopdf}

\usepackage{wrapfig}
\title{Adaptive Learning of Tensor Network Structures}

\author{%
  Meraj Hashemizadeh\\
  Mila \& DIRO\\ Université de Montréal
  \And 
  Michelle Liu\\
  Mila \& DIRO\\ Université de Montréal
  \And
 Jacob Miller\\
  Mila \& DIRO\\ Université de Montréal
  \And 
 Guillaume Rabusseau\\
  CCAI chair - Mila \& DIRO\\ Université de Montréal
}

\begin{document}

\maketitle

\begin{abstract}
Tensor Networks~(TN) offer a powerful framework to efficiently represent very high-dimensional objects. TN have recently shown their potential for machine learning applications and offer a unifying view of common tensor decomposition models such as Tucker, tensor train~(TT) and tensor ring~(TR). However, identifying the best tensor network structure from data for a given task is challenging.  
In this work, we leverage the TN formalism to develop a generic and efficient adaptive algorithm to jointly learn the structure and the parameters of a TN from data. Our method is based on a simple greedy approach starting from a rank one tensor and successively identifying the most promising tensor network edges for small rank increments. Our algorithm can adaptively identify TN structures with small number of parameters that effectively optimize any differentiable objective function. Experiments on tensor decomposition, tensor completion and model compression tasks  demonstrate the effectiveness of the proposed algorithm. In particular, our method outperforms the state-of-the-art evolutionary topology search introduced in~\cite{li2020evolutionnary} for tensor decomposition of images~(while being orders of magnitude faster) and finds efficient structures to compress neural networks outperforming popular TT based approaches~\cite{novikov2015tensorizing}.
\end{abstract}

\section{Introduction}

Matrix factorization is ubiquitous in machine learning and data science and forms the backbone of many algorithms. Tensor decomposition techniques emerged as a powerful generalization of matrix factorization. They are particularly suited to handle high-dimensional multi-modal data and have been successfully applied in neuroimaging~\citep{Zhou2013}, signal processing~\citep{Cichocki2009,sidiropoulos2017tensor}, 
spatio-temporal analysis~\citep{bahadori2014fast,rabusseau2016low} and computer vision~\citep{Lu2013}. Common tensor learning tasks include tensor decomposition~(finding a low-rank approximation of a given tensor), tensor regression~(which extends linear regression to the multi-linear setting), and tensor completion~(inferring a tensor from a subset of observed entries).

Akin to matrix factorization, tensor methods rely on factorizing a high-order tensor into small factors. However, in contrast with matrices, there are many different ways of decomposing a tensor, each one giving rise to a different notion of rank, including CP, Tucker, Tensor Train~(TT) and Tensor Ring~(TR). 
For most tensor learning problems, there is no clear way of choosing which decomposition model to use, and the cost of model mis-specification can be high. It may even be the case that none of the commonly used models is suited for the task, and new decomposition models would achieve better tradeoffs between minimizing the number of parameters and minimizing a given loss function.

We propose an adaptive tensor learning algorithm which is agnostic to decomposition models. Our approach relies on the \emph{tensor network} formalism, which has shown great success in the many-body physics community~\cite{penrose1971applications,feynman1986quantum,deutsch1989quantum} and has recently demonstrated its potential in machine learning for compressing models~\cite{novikov2015tensorizing,yang_tensor-train_2017,garipov2016ultimate,novikov2014putting,izmailov2018scalable,yu2018tensor}, developing new insights into the expressiveness of deep neural
networks~\cite{cohen2016expressive,khrulkov2018expressive}, and designing novel approaches to  supervised~\cite{stoudenmire2016supervised,glasser2018supervised} and unsupervised~\cite{stoudenmire2018learning,han_unsupervised_2018,miller2020tensor} learning. Tensor networks offer a unifying view of tensor decomposition models, allowing one to reason about tensor factorization in a general manner, without focusing on a particular  model.

In this work, we design a greedy algorithm to efficiently search the space of tensor network structures for common tensor problems, including decomposition, completion and model compression. We start by considering the novel tensor optimization problem of minimizing a loss over arbitrary tensor network structures under a constraint on the number of parameters. To the best of our knowledge, this is the first time that this problem is considered. 
The resulting problem is a bi-level optimization problem where the upper level is a discrete optimization over tensor network structures, and the lower level is a continuous optimization of a given loss function. We propose a greedy approach to optimize the upper-level problem, which is combined with continuous optimization techniques to optimize the lower-level problem. Starting from a rank one initialization, the greedy algorithm successively identifies the most promising edge of a tensor network for a rank increment, making it possible to adaptively identify from data the tensor network structure which is best suited for the task at hand.

The greedy algorithm we propose is conceptually simple, and experiments on tensor decomposition, completion and model compression tasks showcase its effectiveness. Our algorithm significantly outperforms a recent evolutionary algorithm~\cite{li2020evolutionnary} for tensor network decomposition on an image compression task by discovering structures that require less parameters while simultaneously achieving lower recovery errors. The greedy algorithm also outperforms CP, Tucker, TT and TR algorithms on an image completion task and finds more efficient TN structures to compress fully connected layers in neural networks than the TT based method introduced in~\cite{novikov2015tensorizing}. 

\paragraph{Related work}  
Adaptive tensor learning algorithms have been previously proposed, but they only consider determining the rank(s) of a specific decomposition and are often tailored to a specific tensor learning task~(e.g., decomposition or regression). In~\cite{bahadori2014fast}, a greedy algorithm is proposed to adaptively find the ranks of a Tucker decomposition for a spatio-temporal forecasting task, and in~\cite{xia2018regularized} an adaptive Tucker based algorithm is proposed for background subtraction. In~\cite{zhao2015bayesian}, the authors present a Bayesian approach for automatically determining the rank of a CP decomposition. In~\cite{ballani2014tree} an adaptive algorithm for tensor decomposition in the hierarchical Tucker format is proposed. In~\cite{grasedyck2019stable} a stable rank-adaptive alternating least square algorithm is introduced for completion in the TT format. Exploring other decomposition relying on the tensor network formalism has been sporadically explored. The work which is the most closely related to our contribution is~\cite{li2020evolutionnary} where evolutionary algorithms are used to approximate the best tensor network structure to exactly decompose a given target tensor. However, the method proposed in~\cite{li2020evolutionnary} only searches for TN structures with uniform ranks and is limited to the problem of tensor decomposition. In contrast, our method jointly explores the space of structures and (non-uniform) ranks to minimize an arbitrary loss function over the space of tensor parameters.  Lastly,~\cite{hayashi2019einconv} proposes to explore the space of tensor network structures for compressing neural networks, a rounding algorithm for general tensor networks is proposed in~\cite{mickelin_tensor_2018} and the notions of rank induced by arbitrary tensor networks are studied in~\cite{ye_tensor_2018}.

\section{Preliminaries}\label{sec:prelims}

In this section, we present  notions of tensor algebra and tensor networks.
We first introduce notations.
For any integer $k$,  $[k]$  denotes the set of integers from $1$ to $k$. 
We use lower case bold letters  for vectors (\eg $\vec{v} \in \Rbb^{d_1}$),
upper case bold letters for matrices (\eg $\M \in \Rbb^{d_1 \times d_2}$) and
bold calligraphic letters for higher order tensors (\eg $\T \in \Rbb^{d_1\times d_2 \times d_3}$). 
The $i$th row (resp. column) of a matrix $\M$ will be denoted by
$\M_{i,:}$ (resp. $\M_{:,i}$). This notation is extended to
slices of a tensor in the obvious way.

\paragraph{Tensors and tensor networks}
We first recall basic definitions of tensor algebra; more details can be found
in~\cite{Kolda09}. 
A \emph{tensor} $\T\in \Rbb^{d_1\times\cdots \times d_p}$ can simply be seen
as a multidimensional array $(\T_{i_1,\cdots,i_p}\ : \ i_n\in [d_n], n\in [p])$. 
The inner product of two tensors is defined by $\langle \St,\Tt\rangle= \sum_{i_1,\cdots,i_p}\St_{i_1\cdots i_p}\T_{i_1\cdots i_p}$ and the Frobenius norm of a tensor is defined by $\|\T\|^2_F=\langle \T,\T\rangle$.
The \emph{mode-$n$ matrix product} of a tensor $\T$ and a matrix
$\X\in\Rbb^{m\times d_n}$ is a tensor  denoted by $\T\ttm{n}\X$. It is 
of size $d_1\times\cdots \times d_{n-1}\times m \times d_{n+1}\times
\cdots \times d_p$ and is obtained by contracting the $n$th mode of $\T$
with the second mode of $\X$, \eg for a 3rd order tensor $\T$, we have
$(\T\ttm{2}\X)_{i_1i_2i_3}=\sum_{j}\T_{i_1ji_3}\X_{i_2j}$. The $n$th mode matricization of $\T$ is denoted by $\T_{(n)}\in\R^{d_n\times\prod_{i\neq n}d_i }$.
\emph{Tensor network diagrams} allow one to represent complex operations on tensors in a graphical and intuitive way. A tensor network~(TN) is simply a graph where nodes represent tensors, and edges represent contractions between tensor modes, i.e. a summation over an index shared by two tensors. In a tensor network, the arity of a vertex~(i.e. the number of \emph{legs} of a node) corresponds to the order of the tensor~(see Figure~\ref{fig:tensor.network.intro}).
\begin{figure}
    \centering
    \begin{tikzpicture}
\tikzset{tensor/.style = {minimum size = 0.4cm,shape = circle,thick,draw=black,fill=blue!60!green!40!white,inner sep = 1pt}, edge/.style   = {thick,line width=.4mm}}

\def\x{0}
\node[tensor] (v) at (\x,0) {$\vb$};
\draw[edge] (v) -- (\x,-0.6);
\node[draw=none] () at (\x+0.15,-0.4) {\textcolor{gray}{$d$}};

\def\x{3}
\node[tensor] (M) at (\x,0) {$\Mb$};
\draw[edge] (M) -- (\x-0.6,0);
\draw[edge] (M) -- (\x+0.6,0);
\node[draw=none] () at (\x-0.4,0.15) {\textcolor{gray}{$m$}};
\node[draw=none] () at (\x+0.4,0.15) {\textcolor{gray}{$n$}};

\def\x{6}
\node[tensor] (T) at (\x,0) {$\Tt$};
\draw[edge] (T) -- (\x-0.6,0);
\draw[edge] (T) -- (\x+0.6,0);
\draw[edge] (T) -- (\x,-0.6);
\node[draw=none] () at (\x-0.4,0.2) {\textcolor{gray}{$d_1$}};
\node[draw=none] () at (\x+0.2,-0.4) {\textcolor{gray}{$d_2$}};
\node[draw=none] () at (\x+0.4,0.2) {\textcolor{gray}{$d_3$}};
\end{tikzpicture}
    \caption{ \ Tensor network representation of a vector $\vb\in\Rbb^d$, a matrix $\Mb\in\Rbb^{m\times n}$ and a 
    tensor $\Tt\in\Rbb^{d_1\times d_2\times d_3}$.%
    }
    \label{fig:tensor.network.intro}
\end{figure}
Connecting two legs in a tensor network represents a contraction over the corresponding indices. Consider the following simple tensor network with two nodes:
\begin{tikzpicture}[baseline=-0.5ex]
\tikzset{tensor/.style = {minimum size = 0.4cm,shape = circle,thick,draw=black,fill=blue!60!green!40!white,inner sep = 0pt}, edge/.style   = {thick,line width=.4mm},every loop/.style={}}
\def\x{0}
\def\y{0}
\node[tensor] (A) at (\x,\y) {$\Ab$};
\node[tensor] (x) at (\x+1,\y) {$\xb$};
\draw[edge] (A) -- (x); 
\draw[edge] (A) -- (\x-0.75,\y); 
\node[draw=none] () at (\x-0.5,\y+0.2) {\textcolor{gray}{$m$}};
\node[draw=none] () at (\x+0.5,\y+0.2) {\textcolor{gray}{$n$}};
\end{tikzpicture}.
The first node represents a matrix $\Ab\in\R^{m\times n}$ and the second one a vector $\x\in\R^{n}$. Since this tensor network has one dangling leg~(i.e. an edge which is not connected to any other node), it represents a first order tensor, i.e., a vector. The edge between the second leg of $\Ab$ and the leg of $\x$ corresponds to a contraction between the second mode of $\Ab$ and the first mode of $\x$. Hence, the resulting tensor network represents the classical matrix-vector product, which can be seen by calculating the $i$th component of this tensor network:
\begin{tikzpicture}[baseline=-0.5ex]
\tikzset{tensor/.style = {minimum size = 0.4cm,shape = circle,thick,draw=black,fill=blue!60!green!40!white,inner sep = 0pt}, edge/.style   = {thick,line width=.4mm},every loop/.style={}}
\def\x{0}
\def\y{0}
\node[tensor] (A) at (\x,\y) {$\Ab$};
\node[tensor] (x) at (\x+0.75,\y) {$\xb$};
\draw[edge] (A) -- (x); 
\draw[edge] (A) -- (\x-0.5,\y); 
\node[draw=none] () at (\x-0.6,\y) {$i$};
\node[draw=none] at (\x+3,\y) {$=\sum_j\Ab_{ij}\xb_j = (\Ab\xb)_{i}$};
\end{tikzpicture}.
Other examples of tensor network representations of common operations on  matrices and tensors can be found in Figure~\ref{fig:tensor.network.commonoperations}. 
Lastly, it is worth mentioning that disconnected tensor networks correspond to  tensor products, e.g., \begin{tikzpicture}[baseline=-0.5ex]
\tikzset{tensor/.style = {minimum size = 0.4cm,shape = circle,thick,draw=black,fill=blue!60!green!40!white,inner sep = 0pt}, edge/.style   = {thick,line width=.4mm},every loop/.style={}}
\def\x{0}
\def\y{0}
\node[tensor] (A) at (\x,\y) {$\u$};
\node[tensor] (x) at (\x+0.75,\y) {$\v$};
\draw[edge] (A) -- (\x-0.5,\y);
\draw[edge] (x) -- (\x+1.25,\y); 
\node[draw=none] at (\x+2,\y) {$=\u\v^\top$};
\end{tikzpicture} is the outer product of $\u$ and $\v$ with components
\begin{tikzpicture}[baseline=-0.5ex]
\tikzset{tensor/.style = {minimum size = 0.4cm,shape = circle,thick,draw=black,fill=blue!60!green!40!white,inner sep = 0pt}, edge/.style   = {thick,line width=.4mm},every loop/.style={}}
\def\x{0}
\def\y{0}
\node[tensor] (A) at (\x,\y) {$\u$};
\node[tensor] (x) at (\x+0.75,\y) {$\v$};
\draw[edge] (A) -- (\x-0.5,\y);
\draw[edge] (x) -- (\x+1.25,\y); 
\node[draw=none] () at (\x-0.6,\y) {$i$};
\node[draw=none] () at (\x+1.35,\y) {$j$};
\node[draw=none] at (\x+2,\y) {$=\u_i\v_j$};
\end{tikzpicture}. Consequently, an edge of dimension~(or rank) 1 in a TN is equivalent to having no edge between the two nodes, e.g., if $R=1$ we have $\begin{tikzpicture}[baseline=-0.5ex]
\tikzset{tensor/.style = {minimum size = 0.4cm,shape = circle,thick,draw=black,fill=blue!60!green!40!white,inner sep = 0pt}, edge/.style   = {thick,line width=.4mm},every loop/.style={}}
\def\x{0}
\def\y{0}
\node[tensor] (A) at (\x,\y) {$\A$};
\node[tensor] (x) at (\x+0.75,\y) {$\B$};
\draw[edge] (A) -- (\x-0.5,\y);
\draw[edge] (x) -- (\x+1.25,\y);
\draw[edge] (A) -- (x);
\node[draw=none] () at (\x-0.6,\y) {{$i$}};
\node[draw=none] () at (\x+0.35,\y+0.2) {\textcolor{gray}{$R$}};
\node[draw=none] () at (\x+1.35,\y) {{$j$}};
\end{tikzpicture}
=  \sum_{r=1}^R \A_{i,r} \B_{r,j} = \A_{i,1}\B_{1,j} =
\begin{tikzpicture}[baseline=-0.5ex]
\tikzset{tensor/.style = {minimum size = 0.4cm,shape = circle,thick,draw=black,fill=blue!60!green!40!white,inner sep = 0pt}, edge/.style   = {thick,line width=.4mm},every loop/.style={}}
\def\x{0}
\def\y{0}
\node[tensor] (A) at (\x,\y) {$\A$};
\node[tensor] (x) at (\x+0.75,\y) {$\B$};
\draw[edge] (A) -- (\x-0.5,\y);
\draw[edge] (x) -- (\x+1.25,\y);
\node[draw=none] () at (\x-0.6,\y) {{$i$}};
\node[draw=none] () at (\x+1.35,\y) {{$j$}};
\end{tikzpicture}$\!\!.%

\begin{figure}
    \centering
    \begin{tikzpicture}
\tikzset{tensor/.style = {minimum size = 0.4cm,inner sep = 0pt, shape = circle,thick,draw=black,fill=blue!60!green!40!white}, edge/.style   = {thick,line width=.4mm},every loop/.style={}}

\def\x{0}
\def\y{0}
\node[tensor] (A2) at (\x,\y) {$\Ab$};
\node[tensor] (x2) at (\x+0.75,\y) {$\Bb$};
\draw[edge] (A2) -- (x2); 
\draw[edge] (A2) -- (\x-0.5,\y); 
\draw[edge] (x2) -- (\x+1.25,\y); 
\node[draw=none] at (\x+1.8,\y) {$=\Ab\Bb$};

\def\x{3.5}
\def\y{0}
\node[tensor] (A) at (\x,\y) {$\Ab$};
\path  (A)  edge [loop above,edge,in=20,out=160,distance=1cm]  (A);
\node[draw=none] at (\x+1.25,\y) {$=\Tr(\Ab)$};

\def\x{6.5}
\def\y{0}
\node[tensor] (T2) at (\x,\y) {$\Tt$};
\node[tensor] (B) at (\x+0.6,\y) {$\Bb$};
\draw[edge] (T2) -- (\x-0.5,\y);
\draw[edge] (T2) -- (B);
\draw[edge] (T2) -- (\x,\y-0.5);
\node[draw=none] at (\x+1.75,\y) {$=\Tt\ttm{3}\Bb$};

\def\x{10}
\def\y{0}
\node[tensor] (T3) at (\x,\y) {$\Tt$};
\node[tensor] (T4) at (\x+0.75,\y) {$\Tt$};
\draw[edge] (T3) -- (T4);
\path  (T3)  edge [bend left=50,edge]  (T4);
\path  (T4)  edge [bend left=50,edge]  (T3);
\node[draw=none] at (\x+1.7,\y) {$=\|\Tt\|_F^2$};

\end{tikzpicture}

    \caption{ \ Tensor network representation of common operation on matrices and tensors.}
    \label{fig:tensor.network.commonoperations}
\end{figure}
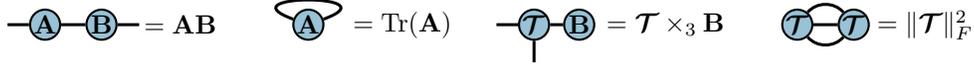

\paragraph{Tensor decomposition and tensor rank}
We now briefly present the most common tensor decomposition models, omitting the CP decomposition which cannot be described using the TN formalism unless hyper-edges are allowed~(which we do not consider in this work). For the sake of simplicity we consider a fourth order tensor $\T\in\R^{d_1\times d_2\times d_3 \times d_4}$, each decomposition can be straightforwardly extended to higher-order tensors. 
A Tucker decomposition~\cite{tucker1966some} decomposes $\T$ as the product of a core tensor $\Gt\in\R^{R_1\times R_2 \times R_3 \times R_4}$ with four factor matrices $\U_i\in\R^{d_i\times R_i}$ for $i=1,\cdots,4$: $\T=\Gt\ttm{1}\U_1\ttm{2}\U_2\ttm{3}\U_3\ttm{4}\U_4$. The Tucker rank, or multilinear rank, of $\T$ is the smallest tuple $(R_1,R_2,R_3,R_4)$ for which such a decomposition exists.
The tensor ring~(TR) decomposition~\cite{zhao_tensor_2016,orus2014,perez2007} expresses each component of $\T$ as the trace of a product of slices of four core tensors $\Gt^{(1)}\in \R^{R_0\times d_1 \times R_1}$,  $\Gt^{(2)}\in \R^{R_1\times d_2 \times R_2}$,  $\Gt^{(3)}\in \R^{R_2\times d_3 \times R_3}$ and $\Gt^{(4)}\in \R^{R_4\times d_4 \times R_0}$: $\T_{i_1,i_2,i_3,i_4} = \Tr( \Gt^{(1)}_{:,i_1,:}\Gt^{(2)}_{:,i_2,:}\Gt^{(3)}_{:,i_3,:}\Gt^{(4)}_{:,i_4,:} )$. The tensor train~(TT) decomposition~\cite{oseledets_tensor-train_2011}~(also known as matrix product states in the physics community) is a particular case of the tensor ring decomposition where $R_0$ must be equal to $1$~($R_0$ is thus omitted when referring to the rank of a TT decomposition). Similarly to Tucker, the TT and TR decompositions naturally give rise to an associated notion of rank: the TR rank~(resp. TT rank) is the smallest tuple $(R_0,R_1,R_2,R_3)$~(resp. $(R_1,R_2,R_3)$) such that a TR~(resp. TT) decomposition exists. 

Tensor networks offer a unifying view of tensor decomposition models: Figure~\ref{fig:tn.decomposition} shows the TN representation of common models. 
Each  decomposition  is naturally associated with the graph topology of the underlying TN. For example, the Tucker decomposition corresponds to  star graphs, the TT decomposition
corresponds to  chain graphs, and the TR decomposition model corresponds to cyclic graphs. The relation between the rank of a decomposition and its number of parameters is different for each model. Letting $p$ be the order of the tensor, $d$ its largest dimension and $R$ the rank of the decomposition~(assuming uniform ranks), the number of parameters is in  $\bigo{R^p+pdR}$ for Tucker, and $\bigo{pdR^2}$ for TT and TR. 
One can see that the Tucker decomposition is not well suited for tensors of very high order since the size of the core tensor grows exponentially with $p$.

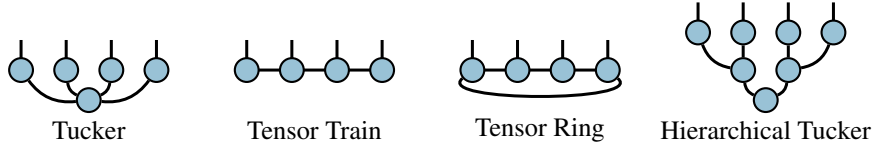
\begin{figure} 
\centering
\begin{tikzpicture}
    \tikzset{tensor/.style = {minimum size = 0.4cm,shape = circle,thick,draw=black,fill=blue!60!green!40!white,inner sep = 0pt}, edge/.style   = {thick,line width=.4mm},every loop/.style={}}

  \pgfmathsetmacro{\start}{0}
  \pgfmathsetmacro{\inc}{0.6}
  
  \foreach [count=\jj] \ii in {0,...,3} {
  \pgfmathsetmacro{\x}{\start+\ii*\inc}
   \draw[edge] (\x,0)  -- (\x,0.4)  {};
   \node[tensor, inner sep=0pt, minimum size=9pt] (tt\ii) at (\x ,0) {};
  }
 \node[tensor, inner sep=0pt, minimum size=9pt] (core) at (\start+1.5*\inc,-0.4) {};
 \draw[thick,line width=.4mm,bend left] (core) edge (tt0);
 \draw[thick,line width=.4mm,bend left] (core) edge (tt1);
 \draw[thick,line width=.4mm,bend right] (core) edge (tt2);
 \draw[thick,line width=.4mm,bend right] (core) edge (tt3);
 \node[draw=none] (tuckertext) at (\start+1.5*\inc,-0.8) {Tucker};

    \pgfmathsetmacro{\start}{3}
  \pgfmathsetmacro{\inc}{0.6}
    \draw[edge] (\start,0)  -- (\start+3*\inc,0)  {};
  \foreach [count=\jj] \ii in {0,...,3} {
  \pgfmathsetmacro{\x}{\start+\ii*\inc}
   \draw[edge] (\x,0)  -- (\x,0.4)  {};
   \node[tensor, inner sep=0pt, minimum size=9pt] (tt\ii) at (\x ,0) {};
  }
 \node[draw=none] (tttext) at (\start+1.5*\inc,-0.8) {Tensor Train};
  
  \pgfmathsetmacro{\start}{6}
  \pgfmathsetmacro{\inc}{0.6}
  \pgfmathsetmacro{\y}{0}
    \draw[edge] (\start,\y)  -- (\start+3*\inc,\y)  {};
  \foreach [count=\jj] \ii in {0,...,3} {
  \pgfmathsetmacro{\x}{\start+\ii*\inc}
   \draw[edge] (\x,\y)  -- (\x,\y+0.4)  {};
   \node[tensor, inner sep=0pt, minimum size=9pt] (tr\ii) at (\x ,\y) {};
  }
\path  (tr0) edge  [bend right,edge,in=320,out=220,distance=0.5cm]  (tr3);
 \node[draw=none] (trtext) at (\start+1.5*\inc,-0.8) {Tensor Ring};
 
 \pgfmathsetmacro{\y}{0}
\pgfmathsetmacro{\start}{9}
  \pgfmathsetmacro{\inc}{0.6}
  
  \foreach [count=\jj] \ii in {0,...,1} {
  \pgfmathsetmacro{\x}{\start+\ii*\inc + \inc}
   \node[tensor, inner sep=0pt, minimum size=9pt] (ht\ii) at (\x ,\y) {};
  }
    \foreach [count=\jj] \ii in {0,...,3} {
  \pgfmathsetmacro{\x}{\start+\ii*\inc}
   \draw[edge] (\x,\y+0.5)  -- (\x,\y+0.9)  {};
   \node[tensor, inner sep=0pt, minimum size=9pt] (ht2-\ii) at (\x ,\y+0.5) {};
  }
 \node[tensor, inner sep=0pt, minimum size=9pt] (core) at (\start+1.5*\inc,\y-0.4) {};
 \draw[thick,line width=.4mm,bend left] (core) edge (ht0);
 \draw[thick,line width=.4mm,bend right] (core) edge (ht1);
 \draw[thick,line width=.4mm,bend left] (ht0) edge (ht2-0);
 \draw[thick,line width=.4mm] (ht0) edge (ht2-1);
 \draw[thick,line width=.4mm] (ht1) edge (ht2-2);
 \draw[thick,line width=.4mm,bend right] (ht1) edge (ht2-3);

 \node[draw=none] (tuckertext) at (\start+1.5*\inc,\y-0.8) {Hierarchical Tucker};
\end{tikzpicture}

\caption{Tensor network representation of common decomposition models for a 4th order tensor. }\label{fig:tn.decomposition}
\end{figure}

\section{A Greedy Algorithm for Tensor Network Structure Learning }

\subsection{Tensor Network Optimization}

We consider the problem of minimizing a loss function $\Lcal:\R^{d_1\times \cdots \times d_p}\to \R_+$ w.r.t.  a tensor $\W$ efficiently parameterized as a tensor network~(TN). We first introduce our notations for TN. 

Without loss of generality, we consider TN having one factor per dimension of the parameter tensor $\W\in\R^{d_1\times \cdots\times d_p}$, where each of the factors has one dangling leg corresponding to one of the dimensions $d_i$~(we will discuss how this encompasses TN structures with internal nodes such as Tucker at the end of this section). In this case,
a TN structure is summarized by a collection of ranks $(R_{i,j})_{1\leq i < j \leq p}$ where each $R_{i,j}\geq 1$ is the dimension of the edge connecting the $i$th and $j$th nodes of the TN~(for convenience, we assume $R_{i,j}=R_{j,i}$ if $i > j$). If there is no edge between nodes $i$ and $j$ in a TN, $R_{i,j}$ is thus equal to $1$~(see~Section~\ref{sec:prelims}). A TN decomposition of $\W\in\R^{d_1\times \cdots\times d_p}$ is then given by a collection of core tensors $\Gt^{(1)},\cdots, \Gt^{(p)}$ where each $\Gt^{(i)}$ is of size ${ R_{1,i}\times \cdots \times R_{i-1,i} \times d_i \times R_{i,i+1}\times \cdots \times R_{i,p}}$. Each core tensor is of order $p$ but some of its dimensions may be equal to one~(representing the absence of edge between the two cores in the TN structure). We use $\TN{\Gt^{(1)},\cdots, \Gt^{(p)}}$ to denote the resulting tensor. Formally, for an order 4 tensor we have
\begin{equation*}\TN{\Gt^{(1)},\cdots, \Gt^{(4)}}_{i_1i_2i_3i_4} = \displaystyle\sum_{j_{1}^{2}=1}^{R_{1,2}}\sum_{j_{1}^{3}=1}^{R_{1,3}} \cdots \sum_{j_{3}^{4}=1}^{R_{3,4}} \Gt^{(1)}_{i_1, j_{1}^2, j_{1}^{3}, j_{1}^{4}} \Gt^{(2)}_{j_{1}^{2},i_2,  j_{2}^{3}, j_{2}^{4}}\Gt^{(3)}_{ j_{1}^{3}, j_{2}^{3},i_3, j_{3}^{4}} \Gt^{(4)}_{j_{1}^{4},j_{2}^{4},j_{3}^{4},i_4}.
\end{equation*}
This definition is straightforwardly extended to TN representing tensors of arbitrary orders. 

As an illustration, for a TT decomposition the ranks of the tensor network representation would be such that $R_{i,j} \neq 1$ if and only if $j=i+1$. The problem of finding a rank $(r_1,r_2,r_3)$ TT decomposition of a target tensor $\T\in\R^{d_1\times d_2\times d_3\times d_4}$ can thus be formalized  as
\begin{equation}\label{eq:TT.decomposition.as.TN}
\min_{\G^{(1)}\in\R^{d_1\times r_1\times 1 \times 1},
\G^{(2)}\in\R^{r_1 \times d_2\times  r_2\times 1}, \atop
\G^{(3)}\in\R^{1\times r_2\times d_3\times  r_3},
\G^{(4)}\in\R^{1 \times 1\times r_3\times d_4}} 
\Lcal( \TN{\G^{(1)},\G^{(2)},\G^{(3)},\G^{(4)}})
\end{equation}
where $\Lcal(\W)=\|\T-\W\|_F^2$. Other common tensor problems can be formalized in this manner. For example, the tensor train completion problem would be formalized similarly with the loss function being $\Lcal(\W) = \frac{1}{|\Omega|} \sum_{(i_1,\cdots,i_p) \in \Omega}  ( \W_{i_1,\cdots,i_p} - \T_{i_1,\cdots,i_p})^2$ where $\Omega\subset [d_1]\times\cdots\times [d_p]$ is the set of observed entries of $\T\in\R^{d_1\times\cdots\times d_p}$, and learning TT models for classification~\cite{stoudenmire2016supervised} and sequence modeling~\cite{han_unsupervised_2018} also falls within this general formulation by using the cross-entropy or log-likelihood as a loss function. 

We now explain how our formalism encompasses TN structure with internal nodes, such as the Tucker format. Since  a rank one edge in a TN is equivalent to having no edge, internal cores can be represented as cores whose dangling leg have dimension $1$. Consider for example the Tucker decomposition $\T=\G\times_1 \U_1 \times_2\U_2\times_3\U_3\in\R^{d_1\times d_2\times d_3}$ of rank $(r_1,r_2,r_3)$. 
The tensor $\T$ can naturally be seen as a fourth order tensor $\tilde{\T}\in\R^{1\times d_1\times d_2 \times d_3}$, 
$\G$ as $\tilde{\G}\in \R^{1\times r_1\times r_2\times r_3}$, 
$\U_1$ as $\tilde{\U}_1\in\R^{r_1\times d_1\times 1 \times 1}$, 
$\U_2$ as $\tilde{\U}_2\in\R^{r_2\times 1\times d_2 \times 1\times 1}$ and 
$\U_3$ as $\tilde{\U}_3\in\R^{r_3\times 1\times 1 \times d_3}$. With these definitions, one can check that
$\TN{\tilde{\G},\tilde{\U}_1,\tilde{\U}_2,\tilde{\U}_3}=\G\times_1\U_1\times_2\U_2\times_3\U_3=\T$. More complex TN structure with internal nodes such as hierarchical Tucker can be represented using our formalism in a similar way. The assumption that each core tensor in a TN structure has one dangling leg corresponding to each of the dimensions of the tensor $\T$ is thus without loss of generality,
since it suffices to augment $\T$ with singleton dimensions to represent TN structures with internal nodes.

\subsection{Problem Statement}

A large class of TN learning problems consist in optimizing a loss function w.r.t. the core tensors of a \emph{fixed} TN structure; this is, for example, the case of the TT completion problem: the rank of the decomposition may be selected using, e.g., cross-validation, but the \emph{overall structure} of the TN is fixed \emph{a priori}. 
In contrast, we propose to optimize the loss function simultaneously w.r.t. the core tensors of the TN \emph{and the TN structure itself}. This joint optimization problem can be formalized as
\begin{equation}
\label{eq:tn.tensor.learning}
\min_{R_{i,j},\atop 1\leq i< j\leq p}\min_{\Gt^{(1)},\cdots, \Gt^{(p)}} \loss(\TN{\Gt^{(1)},\cdots, \Gt^{(p)}})\ \ \ \  \text{ s.t. } \texttt{size}(\Gt^{(1)},\cdots, \Gt^{(p)}) \leq C
\end{equation}
where $\loss$ is a loss function, each core tensor $\Gt^{(i)}$ is in $\R^{R_{1,i}\times \cdots \times R_{i-1,i}\times d_i \times  R_{i,i+1}\times \cdots \times R_{i,p}}$, $C$ is a bound on the number of parameters, and $\texttt{size}(\Gt^{(1)},\cdots, \Gt^{(p)})$ is the number of parameters of the TN, which is equal to $\sum_{i=1}^p d_i R_{1,i} \cdots  R_{i-1,i} R_{i,i+1} \cdots R_{i,p}$. Note that if $K$ is the maximum arity of a node in a TN, its number of parameters is in $\bigo{pdR^K}$ where $d=\max_i d_i$ and $R=\max_{i,j}R_{i,j}$.

Problem~\ref{eq:tn.tensor.learning} is a bi-level optimization problem where the upper level is a discrete optimization over TN structures,  and the lower level is a continuous optimization problem~(assuming the loss function is continuous). If it is possible to solve the lower level continuous optimization, an exact solution can be found by  enumerating the search space of the upper level, i.e. enumerating all TN structures satisfying the constraint on the number of parameters, and selecting the one achieving the lower value of the objective. This approach is, of course, not realistic since the search space is combinatorial in nature, and its size will grow exponentially with $p$. Moreover, for most tensor learning problems, the lower-level continuous optimization problem is NP-hard. In the next section, we propose a general greedy approach to tackle this problem.

\subsection{Greedy Algorithm}

We propose a greedy algorithm which consists in first optimizing the loss function $\loss$ starting from a rank one  initialization of the tensor network, i.e. $R_{i,j}$ is set to one for all $i,j$ and each core tensor $\Gt^{(i)}\in\R^{ R_{1,i}\times \cdots \times R_{i-1,i}\times d_i \times R_{i,i+1}\times \cdots \times R_{i,p}}$ is  initialized randomly. At each iteration of the greedy algorithm, the most promising edge of the current TN structure is identified through some efficient heuristic, the corresponding rank is increased, and the loss function is optimized w.r.t. the core tensors of the new TN structure initialized through a weight transfer mechanism. In addition, at each iteration, the greedy algorithm identifies nodes that can be split to create internal nodes in the TN structure by analyzing the spectrum of matricizations of its core tensors. 

The overall greedy algorithm, named \texttt{Greedy-TN}, is summarized in Algorithm~\ref{alg:greedy}. In the remaining of this section, we describe the continuous optimization, weight transfer, best edge identification and node splitting procedures.  
For Problem~\ref{eq:tn.tensor.learning}, a natural stopping criterion for the greedy algorithm is when the maximum number of parameters is reached, but more sophisticated stopping criteria can be used. For example, the greedy algorithm can be stopped once a given loss threshold is reached, which leads to an approximate solution to the problem of identifying the TN structure with the least number of parameters achieving a given loss threshold. For learning tasks~(e.g., TN classifiers or tensor completion), the stopping criterion can be based on validation data~(for example, using an early stopping scheme).

\begin{algorithm}[t]
   \caption{\texttt{Greedy-TN}: Greedy algorithm for tensor network structure learning.}
   \label{alg:greedy}
\begin{algorithmic}[1]
   \REQUIRE Loss function $\loss:\R^{d_1\times\cdots\times d_p}\to\R$, splitting node threshold $\varepsilon$.
    \STATE // \emph{Initialize tensor network to a random rank one  tensor and optimize loss function.}
    \STATE $R_{i,j}\leftarrow 1$ for $1\leq i < j \leq p$
    \STATE Initialize core tensors $\Gt^{(i)}\in\R^{ \times R_{1,i}\times \cdots \times R_{i-1,i}\times d_i\times  R_{i,i+1}\times \cdots \times R_{i,p}}$ randomly
    \STATE\label{algline:greedy.optim1} $(\Gt^{(1)},\cdots,\Gt^{(p)})\leftarrow \texttt{optimize}\ \  \loss(\TN{\Gt^{(1)},\cdots,\Gt^{(p)}})$ w.r.t. $\Gt^{(1)},\cdots,\Gt^{(p)}$
    \REPEAT 
    \STATE\label{algline:greedy.findbestedge} $(i,j)\leftarrow \texttt{find-best-edge}(\loss, (\Gt^{(1)},\cdots,\Gt^{(p)}))$
    \STATE // \emph{Weight transfer}
    \STATE\label{algline:greedy.newslice.start} $\hat{\G}^{(k)}\leftarrow \G^{(k)}$ for $k\in [p]\setminus\{i,j\}$;\ \ \ \ $R_{i,j} \leftarrow R_{i,j} + 1$ 
    \STATE\label{algline:greedy.newslice1} $\hat{\Gt}^{(i)} \leftarrow \texttt{add-slice}(\Gt^{(i)},j)$\ \ \  // \emph{add new slice to the $j$th mode of $\Gt^{(i)}$}
    \STATE\label{algline:greedy.newslice.end} $\hat{\Gt}^{(j)} \leftarrow \texttt{add-slice}(\Gt^{(j)},i)$\ \ \  // \emph{add  new slice to the $i$th mode of $\Gt^{(j)}$}
    \STATE // \emph{Optimize new tensor network structure}
    \STATE \label{algline:greedy.optim2} $(\Gt^{(1)},\cdots,\Gt^{(p)})\leftarrow \texttt{optimize}\ \  \loss(\TN{\Gt^{(1)},\cdots,\Gt^{(p)}})$ from init. $\hat{\Gt}^{(1)},\cdots,\hat{\Gt}^{(p)}$
    \STATE // \emph{Add internal nodes if possible (number of cores $p$ may be increased after this step)}
    \STATE\label{algline:greedy.internal.nodes} $(\Gt^{(1)},\cdots,\Gt^{(p)})\leftarrow \texttt{split-nodes}((\Gt^{(1)},\cdots,\Gt^{(p)}),\varepsilon)$
    \UNTIL Stopping criterion
\end{algorithmic}
\end{algorithm}

\paragraph{Continuous Optimization}
Assuming that the loss function $\loss$ is continuous and differentiable, standard gradient-based optimization algorithms can be used to solve the inner optimization problem~(line~\ref{algline:greedy.optim2} of Algo.~\ref{alg:greedy}). E.g., in our experiments on compressing neural network layers~(see Section~\ref{par:xp-mnist}) we use Adam~\cite{KingmaB14}. For particular losses, more efficient optimization methods can be used: in our experiments on tensor completion and tensor decomposition, we use the Alternating Least-Squares~(ALS)~\cite{Kolda09,comon2009tensor} algorithm which consists in alternatively solving the minimization problem w.r.t. one of the core tensors while keeping the other ones fixed until convergence. 

\paragraph{Weight Transfer}
A key idea of our approach is to restart the continuous optimization process where it left off at the previous iteration of the greedy algorithm. 
This is achieved by initializing the new slices of the two core tensors connected by the incremented edge to values close to 0, while keeping all the other parameters of the TN unchanged (line~\ref{algline:greedy.newslice.start}-\ref{algline:greedy.newslice.end} of Algo.~\ref{alg:greedy}). 
As an illustration, for a tensor network of order $4$, increasing the rank of the edge $(1,2)$ by $1$ is done by adding a slice of size $d_1 \times R_{1,3}\times R_{1,4}$ (resp. $d_2 \times R_{2,3} \times R_{2,4}$) to the second mode of $\Gt^{(1)}$~(resp. first mode of $\Gt^{(2)}$). After this operation, the new shape of $\G^{(1)}$ will be $d_1 \times (R_{1,2}+1) \times R_{1,3}\times R_{1,4}$ and the one of  $\Gt^{(2)}$ will be $(R_{1,2}+1)\times d_2 \times R_{2,3} \times R_{2,4}$.
The following proposition shows that if these slices were initialized exactly to 0, the resulting TN would represent exactly the same tensor as the original one. However, it is important to initialize the slices randomly with small values to break symmetries that could constrain the continuous optimization process.

\begin{prop}
Let $\G^{(k)}\in \R^{R_{1,k}\times \cdots \times R_{k-1,k}\times d_k\times  R_{k,k+1}\times \cdots \times R_{k,p}}$ for $k\in [p]$ be the core tensors of a tensor network and let $1\leq i<j\leq p$.
Let $\tilde{R}_{i'j'}=R_{i',j'}+1$ if $({i}',{j}')=(i,j)$ and $R_{{i}',{j}'}$ otherwise, and define the core tensors $\tilde{\G}^{(k)}\in \R^{\tilde{R}_{1,k}\times \cdots \times \tilde{R}_{k-1,k}\times d_k\times  \tilde{R}_{k,k+1}\times \cdots \times \tilde{R}_{k,p}}$ for $k\in [p]$ by 
$$(\tilde{\Gt}^{(i)})_{(j)} = \begin{bmatrix} (\Gt^{(i)})_{(j)}\\ -  \mathbf{0} -\end{bmatrix}  \text{, }(\tilde{\Gt}^{(j)})_{(i)} = \begin{bmatrix} (\Gt^{(j)})_{(i)}\\ -  \mathbf{0} -\end{bmatrix}  \text{ and } \ \tilde{\G}^{(k)}=\G^{(k)} \text{ for }k\in[p]\setminus\{i,j\}$$
where $\vec{0}$ denotes a row vector of zeros of the appropriate size in each block matrix. 

Then, the core tensors $\tilde{\G}^{(k)}$ correspond to the same tensor network as the core tensors $\G^{(k)}$, i.e., $\TN{\tilde{\G}^{(1)},\cdots,\tilde{\G}^{(p)}}=\TN{{\G^{(1)}},\cdots,{\G^{(p)}}}$.

\end{prop}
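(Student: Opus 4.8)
The plan is to expand both tensor networks in coordinates and compare them entry by entry. Writing out $\TN{\G^{(1)},\dots,\G^{(p)}}_{i_1\cdots i_p}$ exactly as in the order-$4$ formula given earlier, the contraction is a sum over one index $j_a^b$ for every edge $(a,b)$ with $a<b$, where $j_a^b$ ranges over $[R_{a,b}]$; the only summation index whose range changes when passing from the cores $\G^{(k)}$ to the cores $\tilde{\G}^{(k)}$ is $j_i^j$, which now ranges over $[\tilde R_{i,j}] = [R_{i,j}+1]$. First I would observe that the mode of $\G^{(i)}$ (resp.\ $\G^{(j)}$) being modified is exactly the one indexed by $j_i^j$: appending a zero row to the mode-$j$ matricization $(\G^{(i)})_{(j)}$ means that $\tilde{\G}^{(i)}$ agrees with $\G^{(i)}$ on every entry whose $j$th index lies in $[R_{i,j}]$, and is zero on every entry whose $j$th index equals $R_{i,j}+1$ (and symmetrically, $\tilde{\G}^{(j)}$ agrees with $\G^{(j)}$ on its $i$th index values in $[R_{i,j}]$ and vanishes when that index equals $R_{i,j}+1$).

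Given this, I would split the sum over $j_i^j$ into the terms with $j_i^j\in[R_{i,j}]$ and the single extra term $j_i^j=R_{i,j}+1$. In the first group, every factor in the product is either an entry of an unchanged core (for $k\notin\{i,j\}$, where $\tilde{\G}^{(k)}=\G^{(k)}$) or an entry of $\tilde{\G}^{(i)},\tilde{\G}^{(j)}$ whose $j_i^j$-index lies in $[R_{i,j}]$, which by the previous step equals the corresponding entry of $\G^{(i)},\G^{(j)}$; hence this group reproduces exactly the coordinate expansion of $\TN{\G^{(1)},\dots,\G^{(p)}}_{i_1\cdots i_p}$. In the extra term, the factor coming from $\tilde{\G}^{(i)}$ (equivalently from $\tilde{\G}^{(j)}$) has its $j_i^j$-index equal to $R_{i,j}+1$ and therefore lies in the appended zero slice, so the whole product vanishes. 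Summing, the two networks have identical entries for every $(i_1,\dots,i_p)$, which is the claim.

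The only delicate point — and the step I would treat most carefully — is the translation between the matricization description used in the statement and the index description used in the contraction: one must check that stacking a single zero row beneath $(\G^{(i)})_{(j)}$ corresponds precisely to appending one new slice, indexed by the new value $R_{i,j}+1$ of the shared edge index, along the correct mode, and that no other entries of $\G^{(i)}$ are disturbed. This is a purely notational verification (the ``$j$th mode'' of the order-$p$ core $\G^{(i)}$ is the leg carrying the edge $(i,j)$, of dimension $R_{i,j}$), but it is where an off-by-one or a mode-mislabeling error could hide. Everything else is a routine reindexing of a finite sum; no property of the loss function and no analytic argument is needed, since the identity is \emph{exact} and purely algebraic.
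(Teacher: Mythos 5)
Your proof is correct and is essentially the argument the paper gives: both hinge on the observation that extending the contracted edge index from $[R_{i,j}]$ to $[R_{i,j}+1]$ adds only terms in which a zero slice of $\tilde{\Gt}^{(i)}$ (or $\tilde{\Gt}^{(j)}$) appears as a factor, so the contraction is unchanged. The paper merely packages this differently — it first contracts all cores other than $i,j$ into a single tensor $\Gt^{\setminus(i,j)}$, reduces the claim to showing the two-core contractions $\Gt^{(i,j)}$ and $\tilde{\Gt}^{(i,j)}$ coincide, and then expresses your index-splitting step as the matricized identity $\tilde{\Gt}^{(i)\top}_{\langle j\rangle}\tilde{\Gt}^{(j)}_{\langle i\rangle} = \Gt^{(i)\top}_{\langle j\rangle}\Gt^{(j)}_{\langle i\rangle} + \mathbf{0}\mathbf{0}^\top$ — whereas you carry out the same cancellation directly in the global coordinate expansion.
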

The proof of the proposition can be found in Appendix~\ref{app:proof.prop.weight.transfer}.
The weight transfer mechanism leads to a more efficient and robust continuous optimization by transferring the knowledge from each greedy iteration to the next and avoiding re-optimizing the loss function from a random initialization at each iteration. An ablation study showing the benefits of  weight transfer is provided in Appendix~\ref{sup:ablation}.

\paragraph{Best Edge Selection}
As mentioned previously, we propose to optimize the inner minimization problem in Eq.~\ref{eq:tn.tensor.learning} using iterative algorithms, namely gradient based algorithms or ALS depending on the loss function $\loss$. In order to identify the most promising edge to increase the rank by 1~(line~\ref{algline:greedy.findbestedge} of Algo.~\ref{alg:greedy}), a reasonable heuristic consists in optimizing the loss for a few epochs/iterations for each possible edge and select the edge which led to the steepest decrease in the loss. One drawback of this approach is its computational complexity: e.g., for ALS, each iteration requires solving $p$ least-squares problem with $d_i\prod_{k\neq i}R_{i,k}$ unknowns for $i\in[p]$.%
We propose to reduce the complexity of the exploratory optimization in the best edge identification heuristic by only optimizing $\loss$ w.r.t. the new slices of the core tensors. Thus, at each iteration of the greedy algorithm, for each possible edge to increase, we transfer the weights from the previous greedy iteration, optimize only w.r.t. the new slices for a small number of iteration, and choose the edge which led to the steepest decrease of the loss. For ALS, this reduces the complexity of each iteration to the one of solving 2 least-squares problems with $d_i\prod_{k\in [p]\setminus\{i,j\}}R_{i,k}$ and $d_j\prod_{k\in [p]\setminus\{i,j\}}R_{i,k}$ unknowns, respectively, where $(i,j)$ is the edge being considered in the search. When using gradient-based optimization algorithms, the same approach is used where the gradient is only computed for~(and back-propagated through) the new slices. 
It is worth mentioning that the greedy algorithm can seamlessly incorporate structural constraints by restricting the set of edges considered when identifying the best edge for a rank increment. For example, it can be used to adaptively select the ranks of a TT or TR decomposition. 

\paragraph{Internal Nodes}
Lastly, we design a simple approach for the greedy algorithm to add internal nodes to the TN structure relying on a common technique used in TN methods to split a node into two new nodes using truncated SVD~(see, e.g., Fig.~7.b in~\cite{stoudenmire2016supervised}). To illustrate this technique, let $\Mt\in\R^{m_1\times m_2 \times n_1 \times n_2}$ be the core tensor associated with a node in a TN we want to split into two new nodes $\At\in\R^{m_1\times m_2 \times r}$ and $\Bt\in \R^{n_1 \times n_2 \times r}$: the first two legs of $\At$~(resp. $\Bt$) will be connected to the core tensors that were connected to $\Mt$ by its first two legs~(resp. last two legs), and the third leg of $\At$ and $\Bt$ will be connected together. This is achieved by taking the rank $r$ truncated SVD of $(\Mt)_{(1,2)} \simeq \U\D\V^\top\in\R^{m_1m_2\times n_1n_2}$~(the matricization of $\Mt$ having modes 1 and 2 as rows and modes 3 and 4 as columns), and letting $\At_{(3)} = \U^\top\in\R^{r \times m_1m_2}$ and $\Bt_{(3)}=\D\V^\top\in\R^{r\times n_1n_2}$. If the truncated SVD is exact, the resulting TN will represent exactly the same tensor as the one before splitting the core $\Mt$. This node splitting procedure is illustrated in the following TN diagram.

\vspace{-0.4cm}
\begin{center}
\begin{tikzpicture}
\tikzset{tensor/.style = {minimum size = 0.4cm,shape = circle,thick,draw=black,fill=blue!60!green!40!white,inner sep = 0pt}, edge/.style   = {thick,line width=.4mm},every loop/.style={}}
\def\x{0}
\def\y{0}
\draw[edge] (\x+0.6,-0.1) -- (\x-0.6,\y-0.1);
\draw[edge] (\x+0.6,+0.1) -- (\x-0.6,\y+0.1);
\node[draw=none] () at (\x-0.5,\y+0.25) {\textcolor{gray}{$m_1$}};
\node[draw=none] () at (\x-0.5,\y-0.25) {\textcolor{gray}{$m_2$}};
\node[draw=none] () at (\x+0.5,\y+0.25) {\textcolor{gray}{$n_1$}};
\node[draw=none] () at (\x+0.5,\y-0.25) {\textcolor{gray}{$n_2$}};
\node[tensor] (M) at (\x,\y) {\scalebox{0.85}{$ \Mt$}};

\node[draw=none] (eq) at (\x+1,\y) {$\simeq$};

\def\x{2}
\node[draw=none] () at (\x-0.5,\y+0.25) {\textcolor{gray}{$m_1$}};
\node[draw=none] () at (\x-0.5,\y-0.25) {\textcolor{gray}{$m_2$}};
\draw[edge] (\x,-0.1) -- (\x-0.6,\y-0.1);
\draw[edge] (\x,+0.1) -- (\x-0.6,\y+0.1);
\node[tensor,fill=blue!50!red!50!white] (U) at (\x,\y) {$\U$};
\node[tensor,fill=green!60!red!40!white] (D) at (\x+0.75,\y) {$\D$};
\node[tensor,fill=green!60!red!40!white] (V) at (\x+1.5,\y) {$\V$};
\draw[edge] (U) -- (D);
\draw[edge] (V) -- (D);
\node[draw=none] () at (\x+0.375,\y+0.15) {\textcolor{gray}{$r$}};
\node[draw=none] () at (\x+1.125,\y+0.15) {\textcolor{gray}{$r$}};
\def\x{3.5}
\draw[edge] (\x+0.6,-0.1) -- (\x,\y-0.1);
\draw[edge] (\x+0.6,+0.1) -- (\x,\y+0.1);
\node[tensor,fill=green!60!red!40!white] (vv) at (\x,\y) {$\V$};
\node[draw=none] () at (\x+0.5,\y+0.25) {\textcolor{gray}{$n_1$}};
\node[draw=none] () at (\x+0.5,\y-0.25) {\textcolor{gray}{$n_2$}};

\node[draw=none] (eq) at (\x+1,\y) {$=$};

\def\x{5.5}
\node[draw=none] () at (\x-0.5,\y+0.25) {\textcolor{gray}{$m_1$}};
\node[draw=none] () at (\x-0.5,\y-0.25) {\textcolor{gray}{$m_2$}};
\draw[edge] (\x,-0.1) -- (\x-0.6,\y-0.1);
\draw[edge] (\x,+0.1) -- (\x-0.6,\y+0.1);
\node[tensor,fill=blue!50!red!50!white] (A) at (\x,\y) {$\At$};
\node[tensor,fill=green!60!red!40!white] (B) at (\x+0.75,\y) {$\Bt$};
\draw[edge] (A) -- (B);
\node[draw=none] () at (\x+0.375,\y+0.15) {\textcolor{gray}{$r$}};
\def\x{6.25}
\node[draw=none] () at (\x+0.5,\y+0.25) {\textcolor{gray}{$n_1$}};
\node[draw=none] () at (\x+0.5,\y-0.25) {\textcolor{gray}{$n_2$}};
\draw[edge] (\x+0.6,-0.1) -- (\x,\y-0.1);
\draw[edge] (\x+0.6,+0.1) -- (\x,\y+0.1);
\node[tensor,fill=green!60!red!40!white] (BB) at (\x,\y) {$\Bt$};

\end{tikzpicture}

\end{center}
\vspace{-0.4cm}

In order to allow the greedy algorithm to learn TN structures with internal nodes, at the end of each greedy iteration, we perform an SVD of each matricization of  $\G^{(k)}$ for $k\in[p]$~(line~\ref{algline:greedy.internal.nodes} of Algo.~\ref{alg:greedy}). For each matricization, we split the corresponding node only if there are enough singular values below a given threshold $\varepsilon$ in order for the new TN structure to have less parameters than the initial one. While this approach may seem computationally heavy, the cost of these SVDs is negligible w.r.t. the continuous optimization step which dominates the overall complexity of the greedy algorithm.

\section{Experiments}
In this section we evaluate \texttt{Greedy-TN} on tensor decomposition and completion, as well as model compression tasks.

\paragraph{Tensor decomposition} We first consider a tensor decomposition task. We randomly generate four target tensors of size $7\times 7 \times 7 \times 7 \times 7$ with the following TN structures:
\vspace*{-0.2cm}
\begin{center}
\scalebox{0.8}{\begin{tikzpicture}
    \tikzset{tensor/.style = {minimum size = 0.4cm,shape = circle,thick,draw=black,fill=blue!60!green!40!white,inner sep = 0pt}, edge/.style   = {thick,line width=.4mm},every loop/.style={}}

    \pgfmathsetmacro{\start}{0}
  \pgfmathsetmacro{\inc}{0.8}
  \pgfmathsetmacro{\y}{0}
    \draw[edge] (\start,0)  -- (\start+4*\inc,0)  {};
  \foreach [count=\jj] \ii in {0,...,4} {
  \pgfmathsetmacro{\x}{\start+\ii*\inc}
   \draw[edge] (\x,0)  -- (\x,0.5)  {};
   \node[tensor, inner sep=0pt, minimum size=11pt] (tt\ii) at (\x ,0) {};
  }
 \node[draw=none] (tttext) at (\start+2*\inc,0.8) {TT target tensor};
\node[draw=none] () at (\start+0.5*\inc,\y+0.2) {\textcolor{gray}{$2$}};
\node[draw=none] () at (\start+1.5*\inc,\y+0.2) {\textcolor{gray}{$3$}};
\node[draw=none] () at (\start+2.5*\inc,\y+0.2) {\textcolor{gray}{$6$}};
\node[draw=none] () at (\start+3.5*\inc,\y+0.2) {\textcolor{gray}{$5$}};

  \pgfmathsetmacro{\start}{5}
  \pgfmathsetmacro{\inc}{0.8}
  \pgfmathsetmacro{\y}{0}
    \draw[edge] (\start,\y)  -- (\start+4*\inc,\y)  {};
  \foreach [count=\jj] \ii in {0,...,4} {
  \pgfmathsetmacro{\x}{\start+\ii*\inc}
   \draw[edge] (\x,\y)  -- (\x,\y+0.5)  {};
   \node[tensor, inner sep=0pt, minimum size=11pt] (tr\ii) at (\x ,\y) {};
  }
\path  (tr0) edge  [bend right,edge,in=320,out=220,distance=0.5cm]  (tr4);
 \node[draw=none] (trtext) at (\start+2*\inc,0.8) {TR target tensor};

\node[draw=none] () at (\start+0.5*\inc,\y+0.2) {\textcolor{gray}{$2$}};
\node[draw=none] () at (\start+1.5*\inc,\y+0.2) {\textcolor{gray}{$3$}};
\node[draw=none] () at (\start+2.5*\inc,\y+0.2) {\textcolor{gray}{$4$}};
\node[draw=none] () at (\start+3.5*\inc,\y+0.2) {\textcolor{gray}{$5$}};
\node[draw=none] () at (\start+2*\inc,\y-0.65) {\textcolor{gray}{$5$}};

   \pgfmathsetmacro{\start}{10}
  \pgfmathsetmacro{\inc}{0.6}
  
  \foreach [count=\jj] \ii in {0,...,4} {
  \pgfmathsetmacro{\x}{\start+\ii*\inc}
   \draw[edge] (\x,0)  -- (\x,0.4)  {};
   \node[tensor, inner sep=0pt, minimum size=11pt] (tt\ii) at (\x ,0) {};
  }
 \node[tensor, inner sep=0pt, minimum size=11pt] (core) at (\start+2*\inc,-0.6) {};
 \draw[thick,line width=.4mm,bend left] (core) edge (tt0);
 \draw[thick,line width=.4mm,bend left] (core) edge (tt1);
 \draw[thick,line width=.4mm] (core) edge (tt2);
 \draw[thick,line width=.4mm,bend right] (core) edge (tt3);
 \draw[thick,line width=.4mm,bend right] (core) edge (tt4);
 \node[draw=none] (tuckertext) at (\start+2*\inc,0.8) {Tucker target tensor};
 \node[draw=none] () at (\start+0.1,\y-0.4) {\textcolor{gray}{$2$}};
\node[draw=none] () at (\start+0.85*\inc,\y-0.35) {\textcolor{gray}{$3$}};
\node[draw=none] () at (\start+2.25*\inc,\y-0.275) {\textcolor{gray}{$4$}};
\node[draw=none] () at (\start+3.15*\inc,\y-0.35) {\textcolor{gray}{$3$}};
\node[draw=none] () at (\start+3.9*\inc,\y-0.4) {\textcolor{gray}{$2$}};

   \pgfmathsetmacro{\start}{14}
  \pgfmathsetmacro{\inc}{0.8}
  \pgfmathsetmacro{\y}{0}
    \draw[edge] (\start,\y)  -- (\start+3*\inc,\y)  {};
  \foreach [count=\jj] \ii in {0,...,3} {
  \pgfmathsetmacro{\x}{\start+\ii*\inc}
   \draw[edge] (\x,\y)  -- (\x,\y+0.5)  {};
   \node[tensor, inner sep=0pt, minimum size=11pt] (tri\ii) at (\x ,\y) {};
  }

   \draw[edge] (\start+1.5*\inc,\y-0.65)  -- (\start+1.5*\inc,\y-0.65-0.5)  {};
   \node[tensor, inner sep=0pt, minimum size=11pt] (tri5) at (\start+1.5*\inc,\y-0.65) {};
   \draw[edge] (tri1) -- (tri5);
   \draw[edge] (tri2) -- (tri5);
   
\node[draw=none] () at (\start+0.5*\inc,\y+0.2) {\textcolor{gray}{$5$}};
\node[draw=none] () at (\start+1.5*\inc,\y+0.2) {\textcolor{gray}{$2$}};
\node[draw=none] () at (\start+2.5*\inc,\y+0.2) {\textcolor{gray}{$5$}};
\node[draw=none] () at (\start+1.5*\inc-0.35,\y-0.4) {\textcolor{gray}{$2$}};
\node[draw=none] () at (\start+2.5*\inc-0.45,\y-0.4) {\textcolor{gray}{$2$}};
 \node[draw=none] (trtext) at (\start+1.5*\inc,0.8) {``Triangle'' target tensor};
\end{tikzpicture}}
\end{center}
\vspace*{-0.3cm}
We run \texttt{Greedy-TN} until it recovers an almost exact decomposition~(stopping criterion is achieved when the relative reconstruction error falls below $10^{-6}$). We compare \texttt{Greedy-TN} with  CP, Tucker and TT decomposition~(using the implementations from the TensorLy python package~\cite{kossaifi2019tensorly}) of increasing rank as baselines~(we use uniform ranks for Tucker and TT). We also include a simple random walk baseline based on \texttt{Greedy-TN}, where the edge for the rank increment is chosen at random at each iteration. Reconstruction errors averaged over 100 runs of this experiment are reported in Figure~\ref{fig:xp.decomposition}, where we see that the greedy algorithm outperforms all baselines for the the four target tensors. Notably, \texttt{Greedy-TN}  outperforms TT/Tucker even on the TT/Tucker targets. This is due to the fact that the rank of the TT and Tucker targets are not uniform and \texttt{Greedy-TN} is able to adaptively set different ranks to achieve the best compression ratio.  Furthermore, \texttt{Greedy-TN} is able to recover the exact TN structure of the triangle target tensor on almost every run. Lastly, we observe that the internal node search of \texttt{Greedy-TN} is only beneficial on the Tucker target tensor, which is expected due to the absence of internal nodes in the other target TN structures.
As an illustration of the running time, for the TR target, one  iteration of \texttt{Greedy-TN} takes approximately 0.91 second on average without the internal node search and 1.18 seconds with the search.
This experiment  showcases the potential cost of model mis-specification: both CP and Tucker struggle to efficiently approximate most target tensors. Interestingly, even the random walk baseline  outperforms CP and Tucker on the TR target tensor. 
\begin{figure}
    \centering
    \includegraphics[width=1\textwidth]{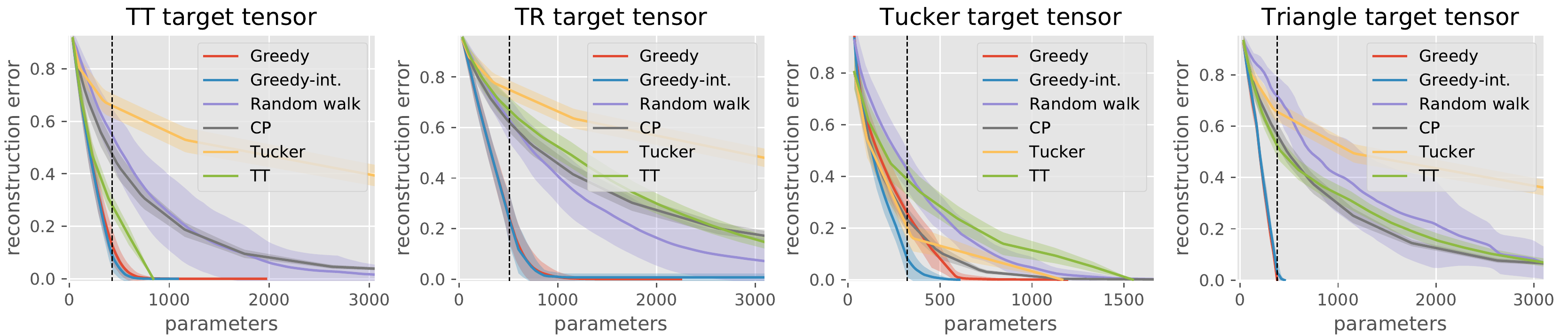}

    \caption{Comparison of \texttt{Greedy-TN} with classical algorithms on the tensor decomposition problem. Curves represent the reconstruction error averaged over 100 runs, shaded areas correspond to standard deviations and the vertical line represents the number of parameters of the target TN. \texttt{Greedy} corresponds to \texttt{Greedy-TN} without the search for internal nodes~(line~\ref{algline:greedy.internal.nodes} of Algo.~\ref{alg:greedy}) while \texttt{Greedy-int.} includes this search.
    }
    \label{fig:xp.decomposition}
\end{figure}

\paragraph{Tensor completion} We compare \texttt{Greedy-TN} with the TT and TR alternating least square algorithms proposed in~\cite{wang2017efficient} and the CP and Tucker decomposition algorithms from Tensorly~\cite{kossaifi2019tensorly} on an image completion task. We consider an experiment presented in~\cite{wang2017efficient}: the completion of an RGB image of Albert Einstein reshaped into a $6\times 10\times 10\times 6\times 10\times 10 \times 3$ tensor~(see~\cite{wang2017efficient} for details) where 10\% of entries are randomly observed. The ranks of methods other than \texttt{Greedy-TN} are successively increased by one until the number of parameters gets larger than 25,000~(we use uniform ranks for TT, TR and Tucker\footnote{For Tucker, the completion is performed on the original image rather than the tensor reshaping since the number of parameters of Tucker grows exponentially with the tensor order, leading to very poor results on the tensorized image.}). The relative errors as a function of number of parameters are reported in Figure~\ref{fig:results-einstein-images-plot}~(left) where we see that \texttt{Greedy-TN} outperforms all methods. The best recovered images for all methods are shown in Figure~\ref{fig:results-einstein-images-plot}~(right) along with the original image and observed pixels. The best recovery error~(9.45\%) is achieved by \texttt{Greedy-TN} at iteration 42 with 21,375 parameters. The second best recovery error~(10.83\%) is obtained by \texttt{TR-ALS}  at  rank $18$ with 17,820 parameters. At iteration 31, \texttt{Greedy-TN} already recovers an image with an error of $10.60\%$ with 10,096 parameters, which is better than the best result of \texttt{TR-ALS} both in terms of parameters and relative error. The images recovered at each iteration of \texttt{Greedy-TN} are shown in Appendix~\ref{app:einstein}. In this experiment, the total running time of \texttt{Greedy-TN} is comparable to the one of \texttt{TR-ALS}~(on the order of hours), which is larger than the one of the other three methods.  

\begin{figure}
    \centering
    \raisebox{-0.065\height}{\includegraphics[width=0.3\textwidth]{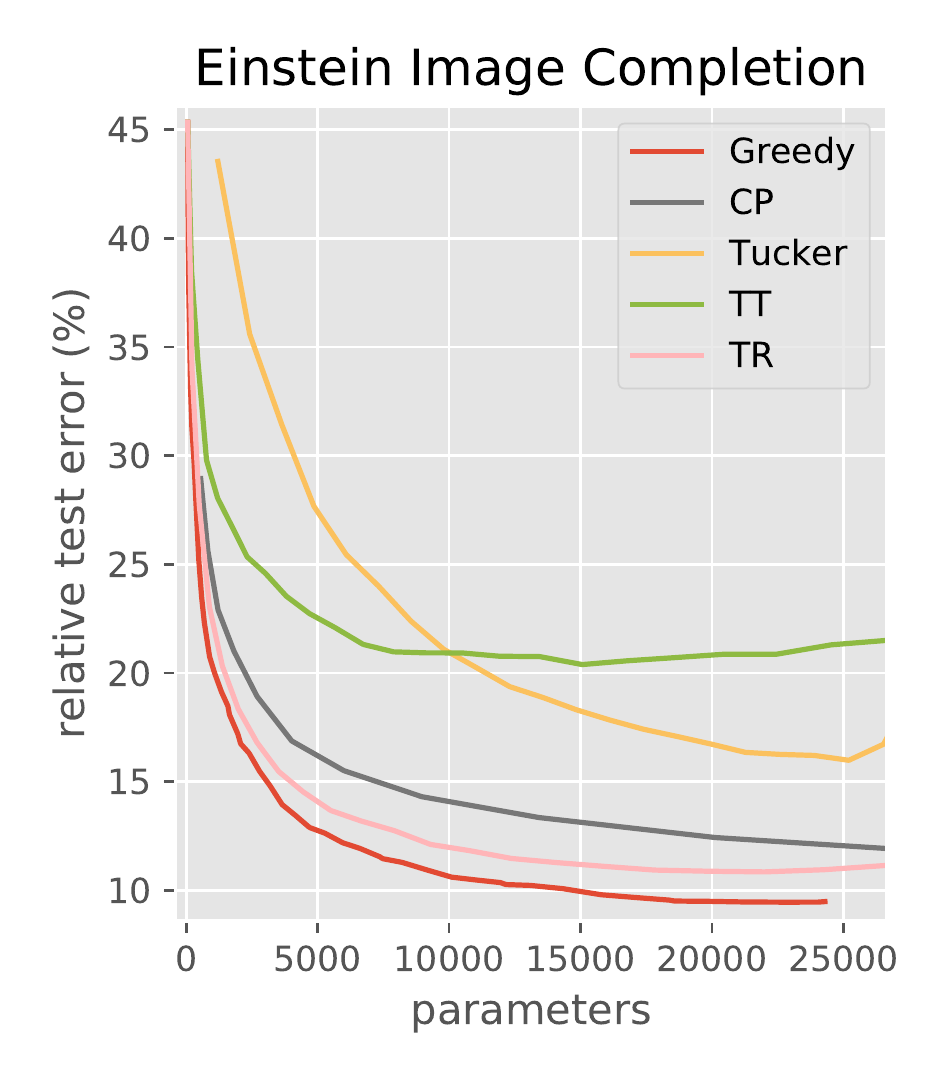}}%
    \includegraphics[width=0.7\textwidth]{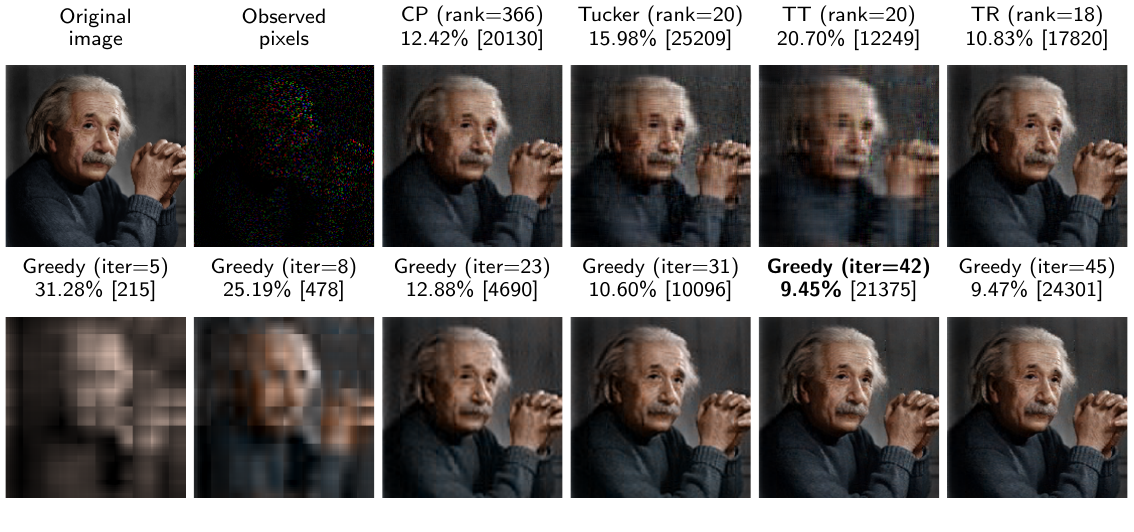}
    \vspace*{-0.5cm}
    \caption{Image completion with 10\% of the entries randomly observed. (left) Relative reconstruction error. (right) Best recovered images for CP, Tucker, TT and TR, and 6 recovered images at different iteration of greedy (image title: RSE\% [number of parameters]). }
    \label{fig:results-einstein-images-plot}
\end{figure}

\paragraph{Image compression} 
In this experiment, we compare \texttt{Greedy-TN} with the genetic algorithm for TN decomposition recently introduced in~\cite{li2020evolutionnary}, denoted by GA(rank=$6$) and GA(rank=$7$) where the rank is a hyper-parameter controlling the trade-off between accuracy and compression ratio. Following~\cite{li2020evolutionnary}, we select 10 images of size $256\times 256$ from the LIVE dataset~\cite{Sheikh06}, tensorize each image to an order-8 tensor of size $4^8$ and run \texttt{Greedy-TN} to decompose each tensor using a squared error loss. 
\texttt{Greedy-TN} is stopped when the lowest RSE reported in~\cite{li2020evolutionnary} is reached. In Table~1, we report the log compression ratio and root square error averaged over 50 random seeds. For all images, our method results in a higher compression ratio compared to GA(rank=$7$). Moreover, for images 1 to 9 our method even outperforms GA(rank=$6$) by achieving both  higher compression ratios and significantly lower RSE. For image 0, setting the greedy stopping criterion to the RSE of GA(rank=$6$), \texttt{Greedy-TN} also achieves a higher compression ratio than GA(rank=$6$): $1.085(0.128)$. Our method is also orders of magnitude faster—few minutes compared to several hours for GA.

\paragraph{Compressing neural networks}\label{par:xp-mnist}
Following \cite{novikov2015tensorizing}, we apply our algorithm to compress a neural network with one hidden layer on the MNIST dataset. The hidden layer is of size $1024 \times 1024$ which we represent as a fifth-order tensor of size $16 \times \cdots \times 16$. We use \texttt{Greedy-TN} to train the core tensors representing the hidden layer weight matrix alongside the output layer end-to-end. We select the best edge for the rank increment using the validation performance on a separate random split of the train dataset with $5,000$ images.

In Figure~\ref{fig:mnist}, we report the train and test accuracies of the TT based method introduced in \cite{novikov2015tensorizing} for uniform ranks $1$ to $8$ and \texttt{Greedy-TL}~(it is worth noting that we use our own implementation of the TT method and  achieve higher accuracies than the ones reported in \cite{novikov2015tensorizing}).
For every model size, our method reaches higher accuracy. The best test accuracy of \texttt{Greedy-TN} is $98.74\%$ with 15,050 parameters, while TT reaches its best accuracy of $98.46\%$ with 14,602 parameters. At iteration 10, \texttt{Greedy-TN} already achieves an accuracy of $98.46\%$ with only 12,266 parameters. The running time of each iteration of \texttt{Greedy-TN} is comparable with training one tensorized neural network with TT. 

\paragraph{Implementation details} \label{par:xp-implementaion}
We use PyTorch~\cite{pytorch} and the NCON function~\cite{pfeifer2014ncon}
to implement $\texttt{Greedy-TL}$. 
For the continuous optimization step,
we use the Adam~\cite{KingmaB14} optimizer with a learning rate of $10^{-3}$ and a batch size of $256$ for $50$ epochs for compressing neural network, and we use ALS for the other three experiments~(ALS is stopped when convergence is reached). The number of iterations/epochs for the best edge identification is set to $2$ for tensor decomposition,  $5$ for image compression and $10$ for image completion and compressing neural networks.
The singular values threshold for the internal node search is set to $\varepsilon=10^{-5}$. In all experiments except the tensor decomposition on the Tucker target, the internal node search did not lead to any improvement of the results. 
All experiments were performed on a single 32GB V100 GPU.

\begin{table}
\raisebox{0pt}{\begin{minipage}[b]{0.55\linewidth}
\centering
\scalebox{0.75}{
\begin{tabular}{|c|c|c|c|}
\hline 
\multirow{2}{*}{\!\!\!\textbf{Image}\!\!\!} & \multicolumn{3}{c|}{\textbf{Log compression ratio CR$\uparrow$ and (RSE$\downarrow$}) ${\scriptstyle \pm \mathrm{std}}$ }\tabularnewline
\cline{2-4} \cline{3-4} \cline{4-4} 
 & Greedy & GA(rank$=\!6$) & GA(rank$=\!7$)\tabularnewline
\hline 
0 & $\mathbf{0.715(0.105)}{\scriptstyle \pm0.152(0.005)}$ & $\mathbf{0.901(0.137)}$ & $0.660(0.115)$\tabularnewline
\hline 
1 & $\mathbf{2.313(0.150)}{\scriptstyle \pm0.189(0.005)}$ & $1.352(0.158)$ & $1.159(0.155)$\tabularnewline
\hline 
2 & $\mathbf{2.139(0.167)}{\scriptstyle \pm0.127(0.004)}$ & $1.452(0.176)$ & $1.268(0.171)$\tabularnewline
\hline 
3 & $\mathbf{3.009(0.185)}{\scriptstyle \pm0.088(0.002)}$ & $1.649(0.193)$ & $1.476(0.189)$\tabularnewline
\hline 
4 & $\mathbf{0.874(0.111)}{\scriptstyle \pm0.129(0.005)}$ & $0.859(0.152)$ & $0.621(0.121)$\tabularnewline
\hline 
5 & $\mathbf{3.668(0.080)}{\scriptstyle \pm0.103(0.001)}$ & $1.726(0.087)$ & $1.548(0.083)$\tabularnewline
\hline 
6 & $\mathbf{2.205(0.097)}{\scriptstyle \pm0.171(0.004)}$ & $1.332(0.110)$ & $1.141(0.104)$\tabularnewline
\hline 
7 & $\mathbf{2.132(0.115)}{\scriptstyle \pm0.202(0.002)}$ & $1.573(0.126)$ & $1.406(0.120)$\tabularnewline
\hline 
8 & $\mathbf{3.634(0.080)}{\scriptstyle \pm0.142(0.001)}$ & $1.679(0.085)$ & $1.505(0.081)$\tabularnewline
\hline 
9 & $\mathbf{1.669(0.174)}{\scriptstyle \pm0.202(0.002)}$ & $1.164(0.194)$ & $0.966(0.185)$\tabularnewline
\hline 
\end{tabular}}
    \vspace{15pt}
	\caption{Log compression ratio and RSE for 10 different images selected from the LIVE dataset.}
	\label{table:img-compression}
\end{minipage}} \hfill 
\begin{minipage}[b]{0.42\linewidth}
\centering
\includegraphics[width=1.05\textwidth]{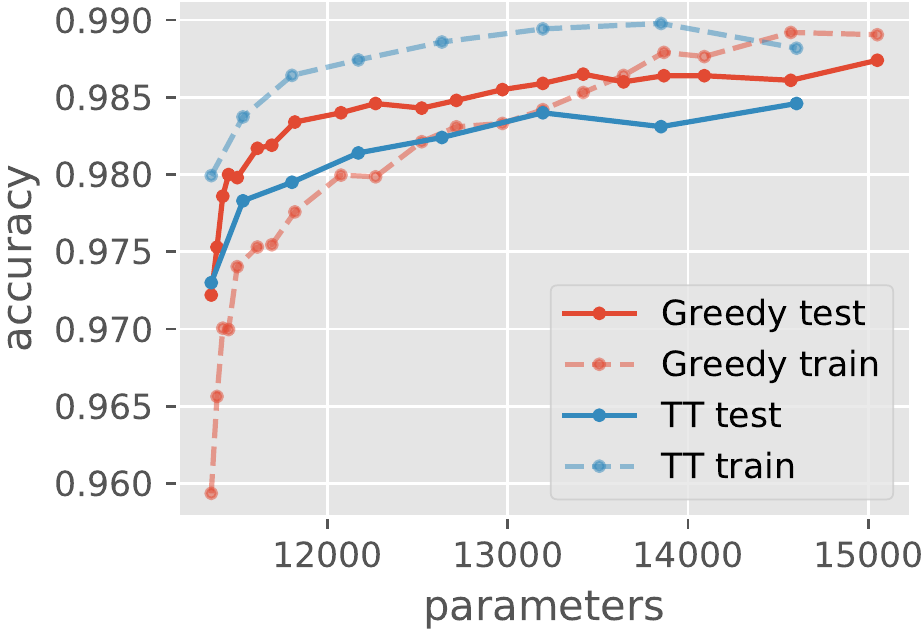}
	\captionof{figure}{Train and test accuracies on the MNIST dataset for different model sizes.}
	\label{fig:mnist}
\end{minipage}
\end{table}

\section{Conclusion}
We introduced a greedy algorithm to jointly optimize an arbitrary loss function and efficiently search the space of TN structures and ranks to adaptively find parameter efficient TN structures from data. Our experimental results show that \texttt{Greedy-TN} outperforms common methods tailored for specific decomposition models on model compression and tensor decomposition and completion tasks. 
Even though \texttt{Greedy-TN} is orders of magnitude faster than the genetic algorithm introduced in~\cite{li2020evolutionnary}, its computational complexity  can still be limiting in some scenarios. In addition, the greedy algorithm may converge to locally optimal TN structures. 
Future work includes exploring more efficient discrete optimization techniques to solve the upper-level discrete optimization problem and scaling up the method to discover TN structures suited for efficient compression of larger neural network models.

\subsection*{Acknowledgements}
This research is supported by the Canadian Institute for Advanced Research (CIFAR AI chair program) and the Natural Sciences and Engineering Research Council of Canada (Discovery program, RGPIN-2019-05949).

\clearpage
\bibliographystyle{plain}
\bibliography{biblio}

\clearpage

\appendix

\begin{center}
    \Large \textbf{Adaptive Learning of Tensor Network Structures \\ 
    \medskip
    (Supplementary Material)}
\end{center}

\section{Proof of Proposition 1}\label{app:proof.prop.weight.transfer}
\begin{prop*}
Let $\G^{(k)}\in \R^{R_{1,k}\times \cdots \times R_{k-1,k}\times d_k\times  R_{k,k+1}\times \cdots \times R_{k,p}}$ for $k\in [p]$ be the core tensors of a tensor network and let $1\leq i<j\leq p$.
Let $\tilde{R}_{i'j'}=R_{i',j'}+1$ if $({i}',{j}')=(i,j)$ and $R_{{i}',{j}'}$ otherwise, and define the core tensors $\tilde{\G}^{(k)}\in \R^{\tilde{R}_{1,k}\times \cdots \times \tilde{R}_{k-1,k}\times d_k\times  \tilde{R}_{k,k+1}\times \cdots \times \tilde{R}_{k,p}}$ for $k\in [p]$ by 
$$(\tilde{\Gt}^{(i)})_{(j)} = \begin{bmatrix} (\Gt^{(i)})_{(j)}\\ -  \mathbf{0} -\end{bmatrix}  \text{, }(\tilde{\Gt}^{(j)})_{(i)} = \begin{bmatrix} (\Gt^{(j)})_{(i)}\\ -  \mathbf{0} -\end{bmatrix}  \text{ and } \ \tilde{\G}^{(k)}=\G^{(k)} \text{ for }k\in[p]\setminus\{i,j\}$$
where $\vec{0}$ denotes a row vector of zeros of the appropriate size in each block matrix. 

Then, the core tensors $\tilde{\G}^{(k)}$ correspond to the same tensor network as the core tensors $\G^{(k)}$, i.e., $\TN{\tilde{\G}^{(1)},\cdots,\tilde{\G}^{(p)}}=\TN{{\G^{(1)}},\cdots,{\G^{(p)}}}$.

\end{prop*}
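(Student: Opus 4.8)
The plan is to compare the two tensor network contractions entry by entry, isolating the single summation index attached to edge $(i,j)$. First I would recall the explicit contraction formula defining $\TN{\cdots}$ (written out for order $4$ in the excerpt, and extended verbatim to general $p$): the entry $\TN{\G^{(1)},\cdots,\G^{(p)}}_{i_1\cdots i_p}$ is a sum over all edge indices $\{j_a^b : 1\leq a<b\leq p\}$ of the product $\prod_{k=1}^p \Gt^{(k)}_{\cdots}$, where each core $\Gt^{(k)}$ is addressed by its dangling index $i_k$ together with the edge indices $j_a^k$ (for $a<k$) and $j_k^b$ (for $b>k$). I would fix the interpretation that the matricization $(\Gt^{(i)})_{(j)}$ places the mode of $\Gt^{(i)}$ associated with edge $(i,j)$ into the rows, so that appending the row $[\,\mathbf{0}\,]$ at the bottom is exactly the operation of adjoining one new slice, indexed by $j_i^j = R_{i,j}+1$, whose every entry is zero.

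The key observation is that the tilde construction changes only two things relative to the original contraction: the index $j_i^j$ attached to edge $(i,j)$ now ranges over $\{1,\dots,R_{i,j}+1\}$ rather than $\{1,\dots,R_{i,j}\}$, and the cores $\Gt^{(i)},\Gt^{(j)}$ are replaced by $\tilde{\Gt}^{(i)},\tilde{\Gt}^{(j)}$. Every other core and every other summation is untouched, since $\tilde{\G}^{(k)}=\G^{(k)}$ for $k\notin\{i,j\}$. I would then split the sum over $j_i^j$ into the terms with $j_i^j\leq R_{i,j}$ and the single new term $j_i^j=R_{i,j}+1$. For the first group, the block structure of the matricizations shows that the slices of $\tilde{\Gt}^{(i)}$ and $\tilde{\Gt}^{(j)}$ indexed by $j_i^j\leq R_{i,j}$ coincide exactly with the corresponding slices of $\Gt^{(i)}$ and $\Gt^{(j)}$ (the top block of the stacked matrix), so these summands are term-by-term identical to those in the original contraction. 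For the new term $j_i^j=R_{i,j}+1$, the appended row of zeros means the entire slice of $\tilde{\Gt}^{(i)}$ at $j_i^j=R_{i,j}+1$ vanishes, so the factor $\tilde{\Gt}^{(i)}_{\cdots}$ is zero and the whole product is annihilated (the same would follow from the zero slice in $\tilde{\Gt}^{(j)}$ alone). Summing, the extended contraction equals the original one for every entry $(i_1,\dots,i_p)$, which is precisely $\TN{\tilde{\G}^{(1)},\cdots,\tilde{\G}^{(p)}}=\TN{\G^{(1)},\cdots,\G^{(p)}}$.

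The argument requires no real computation once the indexing is pinned down, so I expect the only obstacle to be notational: carefully matching the matricization $(\cdot)_{(j)}$ with the correct edge index $j_i^j$ in the contraction sum, and confirming that stacking a zero row beneath $(\Gt^{(i)})_{(j)}$ corresponds exactly to introducing a zero-valued slice at the single new index value $R_{i,j}+1$. Keeping the bookkeeping explicit for general $p$ (rather than only the order-$4$ display) is the one place where care is needed; the mathematical content reduces to the trivial fact that a zero factor kills its summand.
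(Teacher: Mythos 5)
Your proof is correct, and it reaches the conclusion by a more direct route than the paper. You work entry-by-entry with the global contraction sum: the only index whose range changes is the one attached to edge $(i,j)$, so you split that sum into the old values (where all slices of $\tilde{\Gt}^{(i)},\tilde{\Gt}^{(j)}$ agree with the originals) and the single new value (where the zero slice annihilates the product). The paper instead proceeds modularly: it first contracts all cores except $i$ and $j$ into a tensor $\Gt^{\setminus(i,j)}$ and the pair into $\Gt^{(i,j)}$, observes that it suffices to prove $\Gt^{(i,j)}=\tilde{\Gt}^{(i,j)}$, and then establishes this via a block-matrix identity on matricizations, $\tilde{\Gt}^{(i)\top}_{\langle j\rangle}\tilde{\Gt}^{(j)}_{\langle i\rangle} = \Gt^{(i)\top}_{\langle j\rangle}\Gt^{(j)}_{\langle i\rangle} + \mathbf{0}\mathbf{0}^\top$, which is the same zero-slice cancellation you use, packaged as a rank-one update that vanishes. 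The underlying mathematical fact is identical in both arguments; what the paper's factorization buys is locality (the computation involves only the two affected cores, with clean tensor-diagram illustrations), at the cost of introducing auxiliary tensors and a nonstandard matricization notation. What your version buys is economy: no intermediate constructions, just careful bookkeeping on the defining sum, though for general $p$ you do need to pin down the correspondence between the mode-$(j)$ matricization and the edge index $j_i^j$ precisely as you note.
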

\begin{proof}
Let $\T= \TN{{\G^{(1)}},\cdots,{\G^{(p)}}}$ and  $\tilde{\T}=\TN{\tilde{\G}^{(1)},\cdots,\tilde{\G}^{(p)}}$. We first split the TN $\T$ and $\tilde{\T}$ in two parts by isolating the $ith$ and $jth$ nodes from the other nodes of the TN: 
\begin{itemize}
    \item let $\Gt^{\setminus(i,j)} \in \mathbb{R}^{\prod_{k \neq i,j} d_k \times \prod_{k \neq j}R_{i,k} \times \prod_{k \neq i}R_{j,k}}$ be the tensor obtained by contracting  all the core tensors of $\Tt$ except for the $i$th and $j$th cores,
    \item let $\Gt^{(i,j)}  \in \mathbb{R}^{d_i \times d_j \times \prod_{k \neq j}R_{i,k} \times \prod_{k \neq i}R_{j,k}}$ be the tensor obtained by contracting $\Gt^{(i)}$  and $\Gt^{(j)}$ along their shared index~(i.e., the $j$th mode of the $i$th core is contracted with the $j$th mode of the $i$th core), 
    \item let $\tilde{\Gt}^{(i,j)}  \in \mathbb{R}^{d_i \times d_j \times \prod_{k \neq j}R_{i,k} \times \prod_{k \neq i}R_{j,k}}$ be the tensor obtained by contracting $\tilde{\Gt}^{(i)}$  and $\tilde{\Gt}^{(j)}$ along their shared index. 
\end{itemize}
One can check that the contraction between the last two modes of $\Gt^{\setminus(i,j)}$ and the last two modes of $\Gt^{(i,j)}$ is a reshaping of $\Tt$. Similarly, since $\tilde{\G}^{(k)}=\G^{(k)}$ for any $k$ distinct from $i$ and $j$, the contraction over the last two modes of $\Gt^{\setminus(i,j)}$ and $\tilde{\Gt}^{(i,j)}$ gives rise to the same reshaping of $\tilde{\Tt}$.
Therefore to prove $\Tt = \tilde{\Tt}$, it suffices to show that $\Gt^{(i,j)} = \tilde{\Gt}^{(i,j)}$. 

This argument is illustrated in the tensor network diagrams below for the particular case $p=4$, $i=1$, $j=2$.

\begin{center}
\scalebox{0.9}{
$\T=
\begin{tikzpicture}[baseline=6ex]
\tikzset{tensor/.style = {minimum size = 0.4cm,shape = circle,thick,draw=black,fill=blue!60!green!40!white,inner sep = 0pt}, edge/.style   = {thick,line width=.4mm},every loop/.style={}}
\def\x{0}
\def\y{0}
\def\inc{2}
\node[tensor,fill=green!60!red!40!white] (G1) at (\x,\y) {\scalebox{0.85}{$ \Gt^{(1)}$}};
\node[tensor,fill=green!60!red!40!white] (G2) at (\x+\inc,\y) {\scalebox{0.85}{$ \Gt^{(2)}$}};
\node[tensor] (G3) at (\x+\inc,\y+\inc) {\scalebox{0.85}{$ \Gt^{(3)}$}};
\node[tensor] (G4) at (\x,\y+\inc) {\scalebox{0.85}{$ \Gt^{(4)}$}};

\node[draw=none, below = 0.5cm of G1] (G1d)  {};
\draw[edge] (G1) -- (G1d) node [left,midway] {\scalebox{0.6}{\textcolor{gray}{$d_1$}}};
\node[draw=none, below = 0.5cm of G2] (G2d)  {};
\draw[edge] (G2) -- (G2d) node [right,midway] {\scalebox{0.6}{\textcolor{gray}{$d_2$}}};
\node[draw=none, above = 0.5cm of G3] (G3d)  {};
\draw[edge] (G3) -- (G3d) node [right,midway] {\scalebox{0.6}{\textcolor{gray}{$d_3$}}};
\node[draw=none, above = 0.5cm of G4] (G4d)  {};
\draw[edge] (G4) -- (G4d) node [left,midway] {\scalebox{0.6}{\textcolor{gray}{$d_4$}}};

\newcommand\placementlist{{"below","left","left","right","right","","above"}}
\newcommand\poslist{{"midway","near start","midway","midway","near start","","above"}}

\foreach \i [count=\I] in {1,...,3}
{
    \pgfmathtruncatemacro{\ip}{(\i +1)};
    \foreach \j [count=\J] in {\ip,...,4}
    {
    \pgfmathsetmacro{\pos}{\poslist[(\i-1)*3+\j-\i-1]}
    \pgfmathsetmacro{\place}{\placementlist[(\i-1)*3+\j-\i-1]}
    \draw[edge] (G\i) -- (G\j) node [\place,midway] {\scalebox{0.6}{\textcolor{gray}{$R_{\i,\j}$}}};

    }
}
\end{tikzpicture} = 
\begin{tikzpicture}[baseline=6ex]
\tikzset{tensor/.style = {minimum size = 0.4cm,shape = circle,thick,draw=black,fill=blue!60!green!40!white,inner sep = 0pt}, edge/.style   = {thick,line width=.4mm},every loop/.style={}}
\def\x{0}
\def\y{0}
\def\inc{2}

\draw[edge] (\x-0.9,\y) -- (\x-0.9,\y+\inc)  node [left,midway] {\scalebox{0.6}{\textcolor{gray}{$R_{1,4}$}}};
\draw[edge] (\x-0.3,\y) -- (\x-0.3,\y+\inc)  node [left,midway] {\scalebox{0.6}{\textcolor{gray}{$R_{1,3}$}}};
\draw[edge] (\x+0.3,\y) -- (\x+0.3,\y+\inc)  node [right,midway] {\scalebox{0.6}{\textcolor{gray}{$R_{2,4}$}}};
\draw[edge] (\x+0.9,\y) -- (\x+0.9,\y+\inc)  node [right,midway] {\scalebox{0.6}{\textcolor{gray}{$R_{2,3}$}}};

\draw[edge] (\x+0.6,\y) -- (\x+0.6,\y-0.75)  node [right,midway] {\scalebox{0.6}{\textcolor{gray}{$d_2$}}};
\draw[edge] (\x-0.6,\y) -- (\x-0.6,\y-0.75)  node [left,midway] {\scalebox{0.6}{\textcolor{gray}{$d_1$}}};
\draw[edge] (\x+0.6,\y+\inc) -- (\x+0.6,\y+\inc+0.75)  node [right,midway] {\scalebox{0.6}{\textcolor{gray}{$d_3$}}};
\draw[edge] (\x-0.6,\y+\inc) -- (\x-0.6,\y+\inc+0.75)  node [left,midway] {\scalebox{0.6}{\textcolor{gray}{$d_4$}}};
\node[tensor,rounded rectangle, minimum width=70,inner sep=1pt,fill=green!60!red!40!white] (G12) at (\x,\y) {$\G^{(1,2)}$};
\node[tensor,rounded rectangle, minimum width=70,inner sep=1pt] (G34) at (\x,\y+\inc) {$\G^{\setminus(1,2)}$};
\end{tikzpicture}
$,\ 
$\tilde{\T}=
\begin{tikzpicture}[baseline=6ex]
\tikzset{tensor/.style = {minimum size = 0.4cm,shape = circle,thick,draw=black,fill=blue!60!green!40!white,inner sep = 0pt}, edge/.style   = {thick,line width=.4mm},every loop/.style={}}
\def\x{0}
\def\y{0}
\def\inc{2}
\node[tensor,fill=blue!50!red!50!white] (G1) at (\x,\y) {\scalebox{0.85}{$ \tilde{\Gt}^{(1)}$}};
\node[tensor,fill=blue!50!red!50!white] (G2) at (\x+\inc,\y) {\scalebox{0.85}{$ \tilde{\Gt}^{(2)}$}};
\node[tensor] (G3) at (\x+\inc,\y+\inc) {\scalebox{0.85}{$ \Gt^{(3)}$}};
\node[tensor] (G4) at (\x,\y+\inc) {\scalebox{0.85}{$ \Gt^{(4)}$}};

\node[draw=none, below = 0.5cm of G1] (G1d)  {};
\draw[edge] (G1) -- (G1d) node [left,midway] {\scalebox{0.6}{\textcolor{gray}{$d_1$}}};
\node[draw=none, below = 0.5cm of G2] (G2d)  {};
\draw[edge] (G2) -- (G2d) node [right,midway] {\scalebox{0.6}{\textcolor{gray}{$d_2$}}};
\node[draw=none, above = 0.5cm of G3] (G3d)  {};
\draw[edge] (G3) -- (G3d) node [right,midway] {\scalebox{0.6}{\textcolor{gray}{$d_3$}}};
\node[draw=none, above = 0.5cm of G4] (G4d)  {};
\draw[edge] (G4) -- (G4d) node [left,midway] {\scalebox{0.6}{\textcolor{gray}{$d_4$}}};

\newcommand\placementlist{{"below","left","left","right","right","","above"}}
\newcommand\poslist{{"midway","near start","midway","midway","near start","","above"}}

\foreach \i [count=\I] in {1,...,3}
{
    \pgfmathtruncatemacro{\ip}{(\i +1)};
    \foreach \j [count=\J] in {\ip,...,4}
    {
    \pgfmathsetmacro{\pos}{\poslist[(\i-1)*3+\j-\i-1]}
    \pgfmathsetmacro{\place}{\placementlist[(\i-1)*3+\j-\i-1]}
    \draw[edge] (G\i) -- (G\j) node [\place,midway] {\scalebox{0.6}{\textcolor{gray}{$R_{\i,\j}$}}};

    }
}
\draw[edge] (G1) -- (G2) node [below,midway,fill=white] {\scalebox{0.6}{\textcolor{gray}{$R_{1,2}+1$}}};
\end{tikzpicture} = 
\begin{tikzpicture}[baseline=6ex]
\tikzset{tensor/.style = {minimum size = 0.4cm,shape = circle,thick,draw=black,fill=blue!60!green!40!white,inner sep = 0pt}, edge/.style   = {thick,line width=.4mm},every loop/.style={}}
\def\x{0}
\def\y{0}
\def\inc{2}

\draw[edge] (\x-0.9,\y) -- (\x-0.9,\y+\inc)  node [left,midway] {\scalebox{0.6}{\textcolor{gray}{$R_{1,4}$}}};
\draw[edge] (\x-0.3,\y) -- (\x-0.3,\y+\inc)  node [left,midway] {\scalebox{0.6}{\textcolor{gray}{$R_{1,3}$}}};
\draw[edge] (\x+0.3,\y) -- (\x+0.3,\y+\inc)  node [right,midway] {\scalebox{0.6}{\textcolor{gray}{$R_{2,4}$}}};
\draw[edge] (\x+0.9,\y) -- (\x+0.9,\y+\inc)  node [right,midway] {\scalebox{0.6}{\textcolor{gray}{$R_{2,3}$}}};

\draw[edge] (\x+0.6,\y) -- (\x+0.6,\y-0.75)  node [right] {\scalebox{0.6}{\textcolor{gray}{$d_2$}}};
\draw[edge] (\x-0.6,\y) -- (\x-0.6,\y-0.75)  node [left] {\scalebox{0.6}{\textcolor{gray}{$d_1$}}};
\draw[edge] (\x+0.6,\y+\inc) -- (\x+0.6,\y+\inc+0.75)  node [right] {\scalebox{0.6}{\textcolor{gray}{$d_3$}}};
\draw[edge] (\x-0.6,\y+\inc) -- (\x-0.6,\y+\inc+0.75)  node [left] {\scalebox{0.6}{\textcolor{gray}{$d_4$}}};
\node[tensor,rounded rectangle, minimum width=70,inner sep=1pt,fill=blue!50!red!50!white] (G12) at (\x,\y) {$\tilde{\G}^{(1,2)}$};
\node[tensor,rounded rectangle, minimum width=70,inner sep=1pt] (G34) at (\x,\y+\inc) {$\G^{\setminus(1,2)}$};
\end{tikzpicture}
$}
\end{center}

Let $(\Gt^{(i,j)})_{[1,3]}$ (resp. $(\tilde{\Gt}^{(i,j)})_{[1,3]}$) be the matricization of $\Gt^{(i,j)}$ (resp. $\tilde{\Gt}^{(i,j)}$) with mode 1 and 3 as the rows and modes 2 and 4 as the columns. We have
$$
(\tilde{\Gt}^{(i,j)})_{[1,3]} = \tilde{\Gt}^{(i)\top}_{\langle j\rangle}\tilde{\Gt}^{(j)}_{\langle i\rangle} = \Gt^{(i) \top}_{\langle j\rangle}\Gt^{(j)}_{\langle i\rangle} + \mathbf{0}\mathbf{0}^\top = (\Gt^{(i,j)})_{[1,3]},
$$
where the notation $\At^{(m)}_{\langle n \rangle}$ denotes the matrix obtained by transposing the $m$th mode of $\At^{(m)}$ to the first mode and matricizing the resulting tensor along the $n$th mode if $m<n$ and along the $(n+1)$th mode if $m >n$%
\footnote{For example, if $\At^{(2)}\in\R^{n_1\times d\times n_3\times n_4}$, $\At^{(2)}_{\langle 3 \rangle}\in\R^{n_3\times dn_1n_4}$ is obtained by transposing $\At^{(2)}$ in a tensor of size $d\times n_1\times n_3 \times n_4$ and matricizing the resulting tensor along the 3rd mode. Similarly, $\At^{(2)}_{\langle 1 \rangle}\in\R^{n_1\times dn_3n_4}$ is obtained by transposing $\At^{(2)}$ in a tensor of size $d\times n_1\times n_3 \times n_4$ and matricizing the resulting tensor along the 2nd mode. Note that $\At^{(m)}_{\langle n \rangle}$ is always a column-wise permutation of the classical matricization $\At^{(m)}_{(n)}$.}.
It then follows that  $\Gt^{(i,j)} = \tilde{\Gt}^{(i,j)}$, hence $\Tt = \widetilde{\Tt}$.

Continuing with the particular case $p=4$, $i=1$, $j=2$, the second part of the proof can be illustrated by the following tensor network diagrams.

\begin{center}
\scalebox{0.85}{{\parbox{.5\linewidth}{%
\begin{align*}
\begin{tikzpicture}[baseline=0.5ex]
\tikzset{tensor/.style = {minimum size = 0.4cm,shape = circle,thick,draw=black,fill=blue!60!green!40!white,inner sep = 0pt}, edge/.style   = {thick,line width=.4mm},every loop/.style={}}
\def\x{0}
\def\y{0}
\def\inc{0.75}
\draw[edge] (\x-0.9,\y) -- (\x-0.9,\y+\inc)  node [left] {\scalebox{0.6}{\textcolor{gray}{$R_{1,4}$}}};
\draw[edge] (\x-0.3,\y) -- (\x-0.3,\y+\inc)  node [left] {\scalebox{0.6}{\textcolor{gray}{$R_{1,3}$}}};
\draw[edge] (\x+0.3,\y) -- (\x+0.3,\y+\inc)  node [right] {\scalebox{0.6}{\textcolor{gray}{$R_{2,4}$}}};
\draw[edge] (\x+0.9,\y) -- (\x+0.9,\y+\inc)  node [right] {\scalebox{0.6}{\textcolor{gray}{$R_{2,3}$}}};
\draw[edge] (\x+0.6,\y) -- (\x+0.6,\y-0.75)  node [right] {\scalebox{0.6}{\textcolor{gray}{$d_2$}}};
\draw[edge] (\x-0.6,\y) -- (\x-0.6,\y-0.75)  node [left] {\scalebox{0.6}{\textcolor{gray}{$d_1$}}};
\node[tensor,rounded rectangle, minimum width=70,inner sep=1pt,fill=blue!50!red!50!white] (G12) at (\x,\y) {$\tilde{\G}^{(1,2)}$};
\end{tikzpicture}
&=
\begin{tikzpicture}[baseline=0.5ex]
\tikzset{tensor/.style = {minimum size = 0.4cm,shape = circle,thick,draw=black,fill=blue!60!green!40!white,inner sep = 0pt}, edge/.style   = {thick,line width=.4mm},every loop/.style={}}
\def\x{0}
\def\y{0}
\def\inc{0.75}
\draw[edge] (\x-0.3,\y) -- (\x-0.3,\y+\inc)  node [left] {\scalebox{0.6}{\textcolor{gray}{$R_{1,4}$}}};
\draw[edge] (\x+0.3,\y) -- (\x+0.3,\y+\inc)  node [left] {\scalebox{0.6}{\textcolor{gray}{$R_{1,3}$}}};
\draw[edge] (\x+1.7,\y) -- (\x+1.7,\y+\inc)  node [right] {\scalebox{0.6}{\textcolor{gray}{$R_{2,4}$}}};
\draw[edge] (\x+2.3,\y) -- (\x+2.3,\y+\inc)  node [right] {\scalebox{0.6}{\textcolor{gray}{$R_{2,3}$}}};
\draw[edge] (\x,\y) -- (\x+2,\y)  node [above,midway] {\scalebox{0.6}{\textcolor{gray}{$R_{1,2}+1$}}};
\draw[edge] (\x+2,\y) -- (\x+2,\y-0.75)  node [right] {\scalebox{0.6}{\textcolor{gray}{$d_2$}}};
\draw[edge] (\x,\y) -- (\x,\y-0.75)  node [left] {\scalebox{0.6}{\textcolor{gray}{$d_1$}}};
\node[tensor,fill=blue!50!red!50!white] (G12) at (\x,\y) {$\tilde{\G}^{(1)}$};
\node[tensor,fill=blue!50!red!50!white] (G12) at (\x+2,\y) {$\tilde{\G}^{(2)}$};
\end{tikzpicture}\\
&=
\begin{tikzpicture}[baseline=0.5ex]
\tikzset{tensor/.style = {minimum size = 0.4cm,shape = circle,thick,draw=black,fill=blue!60!green!40!white,inner sep = 0pt}, edge/.style   = {thick,line width=.4mm},every loop/.style={}}
\def\x{0}
\def\y{0}
\def\inc{0.75}
\draw[edge] (\x-0.3,\y) -- (\x-0.3,\y+\inc)  node [left] {\scalebox{0.6}{\textcolor{gray}{$R_{1,4}$}}};
\draw[edge] (\x+0.3,\y) -- (\x+0.3,\y+\inc)  node [left] {\scalebox{0.6}{\textcolor{gray}{$R_{1,3}$}}};
\draw[edge] (\x+1.7,\y) -- (\x+1.7,\y+\inc)  node [right] {\scalebox{0.6}{\textcolor{gray}{$R_{2,4}$}}};
\draw[edge] (\x+2.3,\y) -- (\x+2.3,\y+\inc)  node [right] {\scalebox{0.6}{\textcolor{gray}{$R_{2,3}$}}};
\draw[edge] (\x,\y) -- (\x+2,\y)  node [above,midway] {\scalebox{0.6}{\textcolor{gray}{$R_{1,2}$}}};
\draw[edge] (\x+2,\y) -- (\x+2,\y-0.75)  node [right] {\scalebox{0.6}{\textcolor{gray}{$d_2$}}};
\draw[edge] (\x,\y) -- (\x,\y-0.75)  node [left] {\scalebox{0.6}{\textcolor{gray}{$d_1$}}};
\node[tensor,fill=green!60!red!40!white] (G12) at (\x,\y) {${\G}^{(1)}$};
\node[tensor,fill=green!60!red!40!white] (G12) at (\x+2,\y) {${\G}^{(2)}$};
\end{tikzpicture}
\ \ +\ \ \left(
\begin{tikzpicture}[baseline=0.5ex]
\tikzset{tensor/.style = {minimum size = 0.4cm,shape = circle,thick,draw=black,fill=blue!60!green!40!white,inner sep = 0pt}, edge/.style   = {thick,line width=.4mm},every loop/.style={}}
\def\x{0}
\def\y{0}
\def\inc{0.75}
\draw[edge] (\x-0.3,\y) -- (\x-0.3,\y+\inc)  node [left] {\scalebox{0.6}{\textcolor{gray}{$R_{1,4}$}}};
\draw[edge] (\x+0.3,\y) -- (\x+0.3,\y+\inc)  node [left] {\scalebox{0.6}{\textcolor{gray}{$R_{1,3}$}}};
\draw[edge] (\x+1.7,\y) -- (\x+1.7,\y+\inc)  node [right] {\scalebox{0.6}{\textcolor{gray}{$R_{2,4}$}}};
\draw[edge] (\x+2.3,\y) -- (\x+2.3,\y+\inc)  node [right] {\scalebox{0.6}{\textcolor{gray}{$R_{2,3}$}}};
\draw[edge] (\x+2,\y) -- (\x+2,\y-0.75)  node [right] {\scalebox{0.6}{\textcolor{gray}{$d_2$}}};
\draw[edge] (\x,\y) -- (\x,\y-0.75)  node [left] {\scalebox{0.6}{\textcolor{gray}{$d_1$}}};
\node[tensor,fill=gray,minimum size=20pt] (G12) at (\x,\y) {$\vec{0}$};
\node[tensor,fill=gray,minimum size=20pt] (G12) at (\x+2,\y) {$\vec{0}$};
\end{tikzpicture}\right)\\
&=
\begin{tikzpicture}[baseline=0.5ex]
\tikzset{tensor/.style = {minimum size = 0.4cm,shape = circle,thick,draw=black,fill=blue!60!green!40!white,inner sep = 0pt}, edge/.style   = {thick,line width=.4mm},every loop/.style={}}
\def\x{0}
\def\y{0}
\def\inc{0.75}
\draw[edge] (\x-0.3,\y) -- (\x-0.3,\y+\inc)  node [left] {\scalebox{0.6}{\textcolor{gray}{$R_{1,4}$}}};
\draw[edge] (\x+0.3,\y) -- (\x+0.3,\y+\inc)  node [left] {\scalebox{0.6}{\textcolor{gray}{$R_{1,3}$}}};
\draw[edge] (\x+1.7,\y) -- (\x+1.7,\y+\inc)  node [right] {\scalebox{0.6}{\textcolor{gray}{$R_{2,4}$}}};
\draw[edge] (\x+2.3,\y) -- (\x+2.3,\y+\inc)  node [right] {\scalebox{0.6}{\textcolor{gray}{$R_{2,3}$}}};
\draw[edge] (\x,\y) -- (\x+2,\y)  node [above,midway] {\scalebox{0.6}{\textcolor{gray}{$R_{1,2}$}}};
\draw[edge] (\x+2,\y) -- (\x+2,\y-0.75)  node [right] {\scalebox{0.6}{\textcolor{gray}{$d_2$}}};
\draw[edge] (\x,\y) -- (\x,\y-0.75)  node [left] {\scalebox{0.6}{\textcolor{gray}{$d_1$}}};
\node[tensor,fill=green!60!red!40!white] (G12) at (\x,\y) {${\G}^{(1)}$};
\node[tensor,fill=green!60!red!40!white] (G12) at (\x+2,\y) {${\G}^{(2)}$};
\end{tikzpicture}\\
&=
\begin{tikzpicture}[baseline=0.5ex]
\tikzset{tensor/.style = {minimum size = 0.4cm,shape = circle,thick,draw=black,fill=blue!60!green!40!white,inner sep = 0pt}, edge/.style   = {thick,line width=.4mm},every loop/.style={}}
\def\x{0}
\def\y{0}
\def\inc{0.75}
\draw[edge] (\x-0.9,\y) -- (\x-0.9,\y+\inc)  node [left] {\scalebox{0.6}{\textcolor{gray}{$R_{1,4}$}}};
\draw[edge] (\x-0.3,\y) -- (\x-0.3,\y+\inc)  node [left] {\scalebox{0.6}{\textcolor{gray}{$R_{1,3}$}}};
\draw[edge] (\x+0.3,\y) -- (\x+0.3,\y+\inc)  node [right] {\scalebox{0.6}{\textcolor{gray}{$R_{2,4}$}}};
\draw[edge] (\x+0.9,\y) -- (\x+0.9,\y+\inc)  node [right] {\scalebox{0.6}{\textcolor{gray}{$R_{2,3}$}}};
\draw[edge] (\x+0.6,\y) -- (\x+0.6,\y-0.75)  node [right] {\scalebox{0.6}{\textcolor{gray}{$d_2$}}};
\draw[edge] (\x-0.6,\y) -- (\x-0.6,\y-0.75)  node [left] {\scalebox{0.6}{\textcolor{gray}{$d_1$}}};
\node[tensor,rounded rectangle, minimum width=70,inner sep=1pt,fill=green!60!red!40!white] (G12) at (\x,\y) {${\G}^{(1,2)}$};
\end{tikzpicture}
\end{align*}}}}
\end{center}

\end{proof}%

\section{Additional Experimental Results}

\subsection{Image Completion}\label{app:einstein}
The images recovered at each iteration of \texttt{Greedy-TN} on the Einstein image completion task, along with the relative test error and number of parameters for each step, are shown in Figures~\ref{fig:einstein_all} and~\ref{fig:einstein_all-2}.

\subsection{Ablation study} \label{sup:ablation}

Here, we study if transferring the weights at each step would lead to better results. We randomly generate $50$ target tensors of size $7 \times 7 \times 7 \times 7 \times 7$ with a TT structure of rank $6, 3, 6, 5$. 
We run \texttt{Greedy-TN} with and without weight transfer until convergence.

The results are shown in Figure~\ref{fig:ablation}, where we see that using the weight transfer mechanism results in a lower loss with the same number of parameters, compared to using a random initialization at each greedy step. This shows that transferring the knowledge from the previous greedy iterations leads to a better initialization for the continuous optimization.

\newpage

\begin{figure}
    \centering
    
    \vspace*{-2cm}
    \includegraphics[width=\textwidth,clip=true,trim=0cm 2.85cm 0cm 0cm]{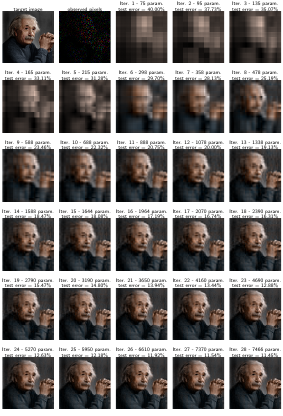}
    
    \caption{Solutions found by \texttt{Greedy-ALS} for the Einstein image completion experiments, labeled by number of parameters and relative test error w.r.t. the full image. \textbf{[continued on next page]}}
    \label{fig:einstein_all}
\end{figure}

\newpage

\begin{figure}
    \centering
    
    \vspace*{-2cm}
    \includegraphics[width=\textwidth,clip=true,trim=0cm 0cm 0cm 4.2cm]{einstein_all.pdf}
    
    \caption{Solutions found by \texttt{Greedy-ALS} for the Einstein image completion experiments, labeled by number of parameters and relative test error w.r.t. the full image.  \textbf{[continued from previous page]}}
    \label{fig:einstein_all-2}
\end{figure}

\begin{figure}
    \centering
    \includegraphics[width=0.5\textwidth]{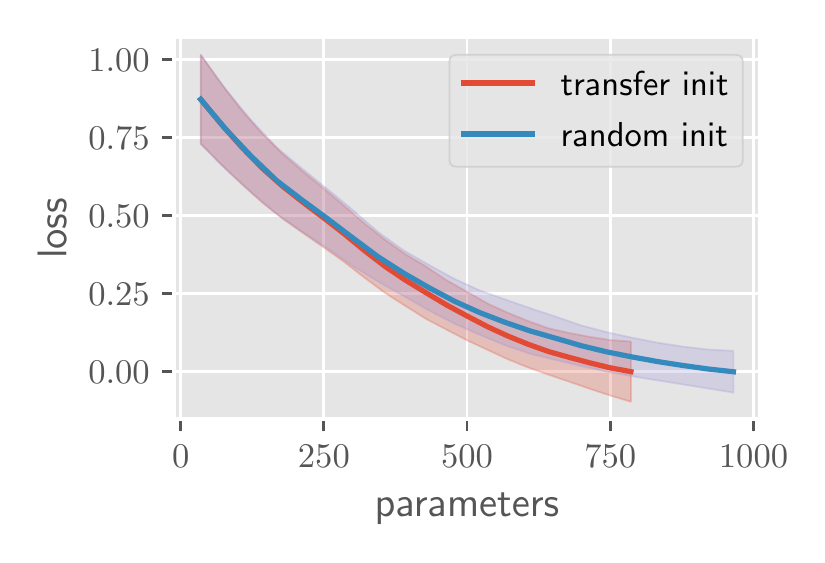}
    \caption{Comparison of \texttt{Greedy-TN} with and without weight transfer on a TT structure decomposition task. Curves represent the reconstruction error averaged over $50$ runs, and shaded areas correspond to standard deviations.}
    \label{fig:ablation}
\end{figure}

\end{document}